\documentclass{article}

\usepackage{booktabs}
\usepackage{amsmath,amsthm,amsfonts,amssymb,mathtools}
\usepackage[ruled,vlined]{algorithm2e}

\usepackage{fullpage}

\newtheorem{theorem}{Theorem}

\newtheorem{lemma}[theorem]{Lemma}
\newtheorem{corollary}[theorem]{Corollary}

\newtheorem*{definition}{Definition}

\begin{document}

\title{Crowdsourced PAC Learning under Classification Noise}
\author{Shelby Heinecke and Lev Reyzin\\ \\
Department of Mathematics, Statistics, \& Computer Science\\
University of Illinois at Chicago\\
\texttt{\{sheine4,lreyzin\}@uic.edu}
}

\date{}

\maketitle

\begin{abstract}
In this paper, we analyze PAC learnability from labels produced by crowdsourcing. In our setting, unlabeled examples are drawn from a distribution and labels are crowdsourced from workers who operate under classification noise, each with their own noise parameter. We develop an end-to-end crowdsourced 
PAC learning algorithm that takes unlabeled data points as input and outputs a trained classifier. Our three-step algorithm incorporates majority voting, pure-exploration bandits, and noisy-PAC learning. We prove several guarantees on the number of tasks labeled by workers for PAC learning in this setting and show that our algorithm improves upon the baseline by reducing the total number of tasks given to workers. We demonstrate the robustness of our algorithm by exploring its application to additional realistic crowdsourcing settings.

\end{abstract}

\section{Introduction and previous work}

\subsection{Overview}

In this paper, we study the problem of learning a classifier from data labeled by a crowd of workers.  In our model, we make the assumption that each
worker has his or her own error rate, independent of the data. 
In this framework, we give a flexible three-step algorithm that achieves the PAC learning criterion. 
First, a subset of data points is chosen from $X$, and sufficiently many workers are asked to label each point, so that with high probability,
majority votes on each point are correct.  This gives a ``ground truth" set of points on which workers can be evaluated, so that in the second step, we can estimate their
individual error rates and identify good workers -- this can be done in many ways, for example by running pure-exploration bandit algorithms.   In the final step, the workers selected in the previous step are assigned
to label sufficiently many new points so that a PAC-classifier can be trained efficiently. While each part of our approach comes from known results, combining all these steps into a streamlined procedure is, to our knowledge, new.  We also illustrate 
the flexibility of our approach herein.

Instead of relying on random workers to produce labels, the goal of our approach is to quickly identify good workers
and assign the main labeling task to them.  Our algorithms work especially well when there are a few expert workers
in a large crowd, and when they are difficult to pre-screen.  Such scenarios can often occur when specialized knowledge is needed, e.g.\ in the case of using crowdsourced labels
to training a classifier to identify cat breeds, where most people presumably don't know anything about cats, but a few people 
in any large crowd will be adept at it.

\subsection{Previous work}

\paragraph{Classification noise.}
We assume that workers in the crowd are imperfect. In particular, each worker $w_i$ has an individual, hidden noise rate $0 \le \eta_i < 1/2$ so that each data point has an independent
and equal chance of being mislabeled, conditioned on the worker. We build our algorithm and analysis around this noise model but show that our analysis can be adapted to handle the case where the noise rates are conditioned on class membership. 
These noise models have been extensively studied in crowdsourcing literature \cite{CaoLTL15, Fang0CH18, KangT18, LiYZ13, WangZ15, ZhouCL14} and are usually attributed to Dawid and Skene~\cite{DawidS79}. 

In learning theory, Valiant's PAC learning model~\cite{Valiant84} was extended by Angluin and Laird~\cite{AngluinL87} to capture a simple notion of noise, which they
termed ``classification noise." In their extension, labels of samples are flipped independently with probability $0 \le \eta < 1/2$ by the noisy oracle, and the learner's runtime and sample complexity 
must also have a polynomial dependence on $\frac{1}{1-2\eta}$. For part of our work, we will adapt the results of Angluin and Laird~\cite{AngluinL87} to our noise setting. Note that our noise setting is similar in that a label of a data point is flipped independently with probability $\eta_i$ from worker $w_i$; in other words, each worker functions as a noisy oracle in our setting. Since our noise model is a generalization, we refer to our noise model as classification noise throughout this paper. 

\paragraph{Majority voting in crowdsourcing.}
Since worker skill can be unknown and varying in crowdsourcing, entities posting data points to be labeled on crowdsourcing platforms may require that each data point be labeled by multiple workers. Majority voting is the most obvious method for aggregating the labels from multiple workers.
Li~et~al.~\cite{LiYZ13} establish error rate bounds of generalized hyperplane rules of which majority voting is a special case. While they assume the same model of classification noise as our work, their analysis is limited to establishing error bounds of these hyperplane aggregation rules and not on PAC learning. Wei and Zhi-Hua~\cite{WangZ15} establish error bounds for majority voting under different assumptions, but they also do not focus on PAC learning. 
Awasthi~et~al.~\cite{AwasthiBHM17}, who focus on PAC learning from crowdsourced labels as we do, note that majority voting is not ideal because the number of worker labels needed to produce an accurate majority vote with probability $1-\delta$ scales with the size of the data set. We arrive to this same conclusion with our noise model, but we find it beneficial to still use majority voting on a small subset of the unlabeled data set to establish a ``high probability" ground-truth training set which helps to eliminate the need for queries to an expert oracle as their algorithm requires. 

\paragraph{PAC learning in crowdsourcing.}
Feng~et~al.~\cite{Fang0CH18}, in very recent work, develop PAC-style bounds for the cost complexity of learning an aggregation function that fits a crowd of workers with varying reliabilities. They focus on using PAC learning to train an aggregation function for the workers' labels; we, however, focus on using PAC learning to train a classifier that generalizes from worker labels. Concurrently, Zhang and Conitzer~\cite{ZhangC19} develop a PAC learning framework for aggregating agents' judgments in a similar setting as ours. However, they focus on recovering the target classifier exactly and employ methods similar to our baseline approach with additional assumptions. Awasthi~et~al.~\cite{AwasthiBHM17} develop PAC learning algorithms in the crowdsourcing setting that generalize from worker labels but their assumptions on the crowd differ from ours. On the one hand, they assume nothing about the workers' label distribution (this is the agnostic learning setting), but on the other hand they assume some fraction $\alpha$ of the crowd are perfect performing workers. While this assumption is reasonable in some settings (for example, if the crowd is curated), there may exist settings where this assumption would not hold since even the best performing workers are capable of making a mistake.  Thus, we instead assume that each worker has a hidden error rate $\eta_i$. Second, in the case that the fraction of perfect performing workers is less than $1/2$, their algorithm requires queries to an expert oracle.  Our algorithm, however, does not require any expert oracle queries. 
 
\paragraph{Multiarmed bandits for crowdsourcing.}
There is substantial progress in multiarmed bandit (MAB) literature regarding identifying the best arms in the vanilla MAB setting \cite{Even-DarMM06, JiangLQ17, KalyanakrishnanS10,
KangT18, MannorT04, RangiF18, ZhouCL14}. In our work, each arm will represent a worker and we build upon these previous results to train a classifier. In this work, we restrict our attention to the fixed confidence setting of best arm vanilla MAB - we want to identify the best arms with confidence $1-\delta$. 

More recently, MAB have been adapted to crowdsourcing settings \cite{CaoLTL15, KangT18, LiuL15, RangiF18, ZhangMS15, ZhouCL14}. These previous works use bandit techniques to strategically assign tasks to workers under assumptions that are realistic to crowdsourcing including limited budgets, limited worker availability, and limited worker loads. Our algorithm builds upon these works to ultimately output a trained classifier. In particular, \cite{CaoLTL15, ZhouCL14} suggest using their MAB top-$K$ arm algorithms to identify good workers, but they assume there exists a set of accurately labeled points from which to learn (ground truth set). Similarly, Liu and Liu~\cite{LiuL15} suggest using many bandit algorithms with the same limitation. Our algorithm does not require a ground truth set of data because in practice, ground truth sets may not be available or can be expensive to obtain. Although some
previous algorithms~\cite{KangT18, LiuL15} do not assume a ground truth set of points and instead estimate the correct label for points online, these algorithms describe an optimal selection policy for assigning tasks rather than training a classifier like ours. Other work \cite{RangiF18, ZhangMS15} likewise focuses on a task assignment policy rather than training a classifier. 

\section{Model and preliminaries}\label{sec:model}

Given a hypothesis class $\mathcal{C}$ of finite VC dimension $d$ and parameters $\epsilon >0$ and $\delta >0$, we 
want to PAC-learn $\mathcal{C}$ using data points with labels gathered from workers in a crowd. Let $W = \{w_i \mid i \in [1,n] \}$ denote the set of all workers, $|W| = n$. Each worker $w_i$ 
has an individual noise rate $0 \leq \eta_i \leq {1}/{2}$ 
and will correctly label any given example with probability $1 - \eta_i$.  
The noise is assumed to be persistent, so a
worker asked to label the same example a second time will deterministically produce the same label again.
In particular, for any target function $c \in \mathcal{C}$,
for any $x \in X$. The worker $w_i$ acts as follows: for all $x$,
 $\Pr[w_i(x) \neq c(x)] = \eta_i.$
 
Similar to Amazon Mechanical Turk, we define a task as a single data point that needs a label. The goal, then, is to PAC-learn $\mathcal{C}$ while minimizing the number of tasks labeled by workers; in other words, we want to minimize the number of times we query the crowd. This modeling requirement is due to the fact that in most realistic settings, workers are paid per task, and a natural
goal is to train a good classifier while expending as little as possible. Also note that this model corresponds to PAC learning from data that has been labeled by workers, each of which is a classifier operating under classification
noise~\cite{AngluinL87}. 

We will ultimately derive upper bounds on the number of tasks labeled by workers to PAC-learn $\mathcal{C}$. We now define several parameters that come into play throughout this paper and in our final bounds. 
We define 
$$\bar{\eta}_W = \frac{1}{|W|} \sum \limits_{ \{j \mid w_j \in W \} }{\eta_j}$$ 
denotes the average error rate of workers in $W$. Let $\bar{\eta}^*_{K, W}$ denote the average error rate of the best $K$ workers in $W$. As a special case, $\bar{\eta}^*_{1, W}$ denotes the error rate of the single best worker in $W$.

To obtain reliable labels, our algorithm will identify approximately good workers. Let $0 \leq \Delta \leq {1}/{2}$. We define a $\Delta$-optimal worker and $\Delta$-optimal set of $K$ workers.
\begin{definition}[$\Delta$-Optimal Worker \cite{Even-DarMM06}]
A worker $w_i \in W$ is said to be $\Delta$-optimal if $\eta_i \leq \bar{\eta}^*_{1, W} + \Delta$.
\end{definition}

\begin{definition}[$\Delta$-Optimal Set of $K$ Workers \cite{ZhouCL14}] Let $S \subseteq W$ and $|S| = K$. The set of workers $S$ is $\Delta$-optimal if $\bar{\eta}_{S} \leq  \bar{\eta}^*_{K, W}+\Delta$.
\end{definition}

\subsection{Baseline approaches}
We now describe two baseline approaches and their corresponding task complexities. For the first baseline approach, we plug $\bar{\eta}_{W}$ into the classification noise bound of Angluin (see Theorem~\ref{angluin-laird} for details)
to get an algorithm that solicits 
\begin{equation}\label{baseline}
\mathcal{O} \left(\frac{d \log\left(1/\delta\right)}{\epsilon (1-2\bar{\eta}_W)^2}\right)
\end{equation}
 labels in total, from worker pool  $W$ of $N$ workers, where workers are selected at random
to label the points. 

Another baseline approach is to obtain a large perfectly labeled set of data points with high confidence via majority voting and use the proper noiseless PAC bound, 
which requires $m = O\left(\frac{d\log(1/\delta)}{\epsilon}\right)$ examples.  
By Theorem~\ref{majvote} (which appears in Section~\ref{majorityvoting}), for each of $m$ datapoints we need
a majority vote of $O\left( \frac{ \log (m / \delta)}{ (1-2\bar{\eta}_W)^2}\right)$ workers to get a perfectly labeled set with high probability.
Combining these two bounds gives a total sample complexity again of 
$$\tilde{\mathcal{O}} \left(\frac{d \log\left(1/\delta \right)}{\epsilon (1-2\bar{\eta}_W)^2}\right),$$ 
which is actually slightly worse (with respect to polylogarithmic terms) than the bound in Equation~\ref{baseline}.

Since we want to learn to arbitrarily small errors $\epsilon$, we do not want the $d/\epsilon$ dependence to be multiplied 
by the factor of $\frac{1}{(1-2\bar{\eta}_W)^2}$, which
could be large for $\bar{\eta}_W$ close to $1/2$. This is the dependence this paper aims to avoid.


\section{Crowdsourced learning algorithm}

Our algorithm proceeds in three parts. First, we choose a small, randomly chosen set of data points to have labeled by multiple workers. This allows us to know the true labels of these data with high confidence using majority voting. Second, we use these labeled data points to identify the approximately best workers. Third, we use these workers to label additional data points from which to train a classifier.

\begin{algorithm}
\caption{Crowdsourcing PAC Algorithm (Informal)}
\textbf{Input}: $n$ workers, unlabeled data points $X$\\
\textbf{Output}: classifier $h \in \mathcal{C}$\\
  \nl take a majority vote with workers on small subset of unlabeled tasks yielding a set of accurately labeled tasks with high confidence \\
  \nl using the ground-truth data from Step 1, identify the approximate top worker(s) (e.g.\ using MAB algorithms \cite{Even-DarMM06, ZhouCL14, JiangLQ17, CaoLTL15})\\
  \nl assign tasks at random among worker(s) identified in Step 2 to perform noisy-PAC learning \cite{AngluinL87}, returning hypothesis $h \in \mathcal{C}$ consistent with the labels of the approximate top worker(s)\\
\end{algorithm}

We now proceed to analyze each part of the algorithm separately. 

\subsection{Majority voting by workers with classification noise}\label{majorityvoting}

Our algorithm begins by getting a set of points for which the labels need to be known, thereby creating a ``ground truth" set on which the workers' error rates
can be tested.  This is done by a majority vote of the labels of randomly selected workers. For this we need the following lemma, which is a 
simple consequence of the Hoeffding (it is also proved in a more general setting of Li~et~al.~\cite{LiYZ13}).

\begin{lemma}
Let $\mathcal{L}(x) = \{w_i(x) \mid w_i \in W\}$ be the labels from workers in $W$, for some $x \in X$. Suppose majority voting over the $n$ labels in $\mathcal{L}(x)$ is applied and the winning label is the final label corresponding to $x$. Then, the error of the majority vote can be upper bounded as follows:
$$
\Pr[\mathrm{MAJ}(\mathcal{L}(x)) \neq c(x)] \leq 2 e^{ {-{n (1-2\bar{\eta}_{W})^2}/{2}} }.
$$
\end{lemma}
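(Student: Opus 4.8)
The plan is to recognize the majority-vote error as a tail event for a sum of independent indicators and to control it with Hoeffding's inequality. First I would fix a point $x \in X$ and, for each worker $w_i \in W$, introduce the indicator $Z_i = \mathbf{1}[w_i(x) \neq c(x)]$. By the noise model, the $Z_i$ are independent Bernoulli random variables with $\Pr[Z_i = 1] = \eta_i$, each taking values in $[0,1]$. Writing $S = \sum_i Z_i$ for the number of workers who mislabel $x$, linearity of expectation gives $\mathbb{E}[S] = \sum_i \eta_i = n\bar\eta_W$. The key observation is that the majority vote is incorrect exactly when at least half the workers err, i.e.\ when $S \geq n/2$ (counting a tie as a possible failure).

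Next I would bound the deviation of $S$ above its mean. Since $\bar\eta_W < 1/2$, the threshold $n/2$ lies strictly above $\mathbb{E}[S]$, at distance
$$t = \frac{n}{2} - n\bar\eta_W = \frac{n(1 - 2\bar\eta_W)}{2}.$$
Applying Hoeffding's inequality to $S$, a sum of $n$ independent variables each ranging over an interval of length $1$, yields
$$\Pr[\mathrm{MAJ}(\mathcal{L}(x)) \neq c(x)] \leq \Pr[S \geq \mathbb{E}[S] + t] \leq \exp\!\left(-\frac{2t^2}{n}\right) = \exp\!\left(-\frac{n(1-2\bar\eta_W)^2}{2}\right).$$
This already matches the claimed exponent.

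Finally, I would account for the leading factor of $2$. The one-sided Hoeffding estimate above in fact establishes the bound without this factor, so its presence in the statement is harmless slack; it conveniently covers the boundary case in which $n$ is even and exactly $n/2$ workers err, letting one charge a tied vote as an error under any tie-breaking convention. I expect no genuine obstacle in this argument, as it is a direct concentration estimate. The only points requiring care are confirming that the heterogeneous per-worker rates $\eta_i$ still produce the average $\bar\eta_W$ as the mean of $S$, and that the cross-worker independence of the $Z_i$ guaranteed by the model is precisely what licenses the Hoeffding step.
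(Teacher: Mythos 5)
Your proof is correct and follows exactly the route the paper indicates: the lemma is stated there as ``a simple consequence of the Hoeffding'' inequality, and your argument --- indicators $Z_i$ with mean $\eta_i$, majority failure as the event $S \geq n/2$, and a one-sided Hoeffding bound at deviation $t = n(1-2\bar\eta_W)/2$ --- is precisely that computation, yielding the stated exponent with the factor of $2$ as harmless slack.
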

As a consequence, we can derive the following theorem.
\begin{theorem}\label{majvote}
Let $Y \subseteq X$ and $|Y| = T$. Suppose we want to get true labels for data points in $Y$ with probability $1-\delta$ using majority voting with the crowd of workers $W$. Then for each $y \in Y$, it is sufficient to solicit 
$${\mathcal O}\left(\frac{\log(T/\delta)}{(1-2\bar{\eta}_W)^2}\right)$$
labels from the crowd.
\end{theorem}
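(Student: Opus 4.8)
The plan is to reduce the many-point guarantee to the single-point guarantee already established in the preceding lemma, and then pay for the reduction with a union bound. First I would fix an individual point $y \in Y$ and ask how many labels $m$ must be solicited so that its majority vote is correct except with some small per-point failure probability $\delta'$. The preceding lemma bounds this per-point failure probability by $2 e^{-m (1-2\bar{\eta}_W)^2/2}$ once we recognize that soliciting $m$ labels by querying workers drawn uniformly at random from $W$ makes each solicited label err with probability $\bar{\eta}_W$ in expectation, so that $m$ plays exactly the role that $n$ played in the lemma.

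Second, I would set up the union bound. We want all $T$ points in $Y$ to be simultaneously labeled correctly with probability at least $1-\delta$. Letting $E_y$ denote the event that the majority vote on $y$ is incorrect, it suffices by the union bound that $\sum_{y \in Y} \Pr[E_y] \leq \delta$, which is guaranteed if each $\Pr[E_y] \leq \delta/T$. So I take $\delta' = \delta/T$ as the target per-point failure probability.

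Third, I would solve for $m$. Combining the two previous steps, it suffices to choose $m$ so that $2 e^{-m(1-2\bar{\eta}_W)^2/2} \leq \delta/T$. Taking logarithms and rearranging gives $m \geq \frac{2 \ln(2T/\delta)}{(1-2\bar{\eta}_W)^2}$, which is $\mathcal{O}\!\left(\frac{\log(T/\delta)}{(1-2\bar{\eta}_W)^2}\right)$, matching the claimed bound.

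The main obstacle is the subtlety hidden in the first step: the lemma is stated for a majority vote over the full crowd with average error $\bar{\eta}_W$, whereas the theorem charges only $m$ labels per point, typically far fewer than $n = |W|$. To apply the lemma honestly I must argue that querying $m$ workers sampled uniformly at random yields error indicators whose mean is $\bar{\eta}_W$, so that the same Hoeffding computation underlying the lemma goes through with $m$ in the exponent. Care is needed here because the noise is assumed persistent, so drawing the same worker twice produces perfectly correlated labels; sampling workers without replacement (or appealing to the regime $n \gg m$) removes this correlation while keeping the per-label error mean equal to $\bar{\eta}_W$. Everything after that is the routine union-bound-and-invert calculation.
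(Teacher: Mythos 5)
Your proof is correct and takes essentially the same route as the paper, whose entire argument for this theorem is ``Follows from Lemma 1 and the union bound'' --- precisely your apply-the-lemma, set the per-point failure to $\delta/T$, union-bound, and invert calculation. The subtlety you flag in your final paragraph (that the lemma is stated for a vote over the whole crowd with mean error $\bar{\eta}_W$, while the theorem solicits only $m$ labels per point, so one must argue that $m$ workers sampled without replacement still give error indicators with mean $\bar{\eta}_W$ and an $m$ in the Hoeffding exponent) is genuine and is silently elided by the paper, so your treatment is if anything more careful than the original.
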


\begin{proof}
Follows from Lemma 1 and the union bound.
\end{proof}
Since Theorem~\ref{majvote} scales poorly, it is not prudent to rely solely on majority voting for gathering a labeled data set. However, we find that using majority voting on a small enough data set can be useful because it can eliminate the assumption of a ground truth set and instead generate an ground-truth set with high probability. In this way, we also eliminate the need for expert oracle queries used in Awasthi~et~al.~\cite{AwasthiBHM17}.\footnote{
This of course relies on access to a sufficiently large crowd, and hence we assume that $N = {\tilde \Omega}\left(\frac{\log(T/\delta)}{(1-2\bar{\eta}_W)^2}\right)$, so that at this stage each worker will be assigned at most one labeling task, to get the label of each point. Additionally, notice that the number of labels in Theorem~\ref{majvote} scales as a function of the number of data points $T$ for which we want labels, as noted by Awasthi~et~al.~\cite{AwasthiBHM17}. Our bound in Theorem~\ref{majvote} is also a function of $\bar{\eta}_W$ because of our classification noise model, which differs from Awasthi~et~al.~\cite{AwasthiBHM17}.}

\subsection{Identifying top performing workers}\label{topworkers}

Using the ground-truth training labeled data set acquired from the previous section, we now identify one approximately good worker. We also examine the case where we want to identify a set of approximately good workers.

\subsubsection{Identifying one $\Delta$-optimal worker}

The naive approach to identifying a $\Delta$-optimal worker with probability $1-\delta$ is to sample each arm ${\mathcal O} \left(\frac{1}{\Delta^2}\log({n}/{\delta})\right)$ times and return the arm with the largest empirical average.
\begin{theorem}\label{deltaoptworker} 
Identifying a $\Delta$-optimal worker with probability at least $1-\delta$ can be done in ${\mathcal O} \left(\frac{n}{\Delta^2}\log({n}/{\delta})\right)$ 
arm trials.

\end{theorem}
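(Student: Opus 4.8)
The plan is to analyze the naive sample-and-select procedure directly, via concentration plus a union bound. First I would fix the number of pulls per arm to $m = \Theta\!\left(\frac{1}{\Delta^2}\log(n/\delta)\right)$, treating each pull of worker $w_i$ as an independent Bernoulli trial recording whether $w_i$'s label on a fresh ground-truth point disagrees with the true label; the empirical disagreement frequency $\hat\eta_i$ is then an unbiased estimate of $\eta_i$. By Hoeffding's inequality, for each fixed $i$ we have $\Pr\!\left[|\hat\eta_i - \eta_i| > \Delta/2\right] \le 2e^{-m\Delta^2/2}$.

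Next I would choose $m$ so that this per-arm deviation probability is at most $\delta/n$; solving $2e^{-m\Delta^2/2} \le \delta/n$ gives $m = \frac{2}{\Delta^2}\log(2n/\delta)$, which is of the claimed order. A union bound over all $n$ workers then yields the ``good event'' $\mathcal{E}$: with probability at least $1-\delta$, every estimate simultaneously satisfies $|\hat\eta_i - \eta_i| \le \Delta/2$.

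The final step is to show that, on $\mathcal{E}$, the selected arm $\hat\imath = \arg\min_i \hat\eta_i$ (the worker with the best empirical performance) is $\Delta$-optimal. Let $i^*$ index a worker achieving $\eta_{i^*} = \bar\eta^*_{1,W}$. Chaining the two-sided deviation bounds with the empirical optimality of $\hat\imath$ gives
$$
\eta_{\hat\imath} \le \hat\eta_{\hat\imath} + \frac{\Delta}{2} \le \hat\eta_{i^*} + \frac{\Delta}{2} \le \eta_{i^*} + \Delta = \bar\eta^*_{1,W} + \Delta,
$$
so $w_{\hat\imath}$ satisfies the definition of a $\Delta$-optimal worker. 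Multiplying the per-arm budget by the $n$ arms gives the total of $\mathcal{O}\!\left(\frac{n}{\Delta^2}\log(n/\delta)\right)$ arm trials.

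I do not expect a genuine obstacle here, since the statement is essentially a textbook application of Hoeffding and the union bound; the only point requiring a little care is the accounting that splits the slack $\Delta$ into two halves---one absorbed by the estimation error of the selected arm and one by that of the optimal arm---so that the $\arg\min$ comparison in the empirical domain transfers cleanly to the true noise rates. I would also note in passing that this naive bound is precisely what the subsequent MAB-based refinements are meant to improve upon, which is why it is worth recording as a baseline.
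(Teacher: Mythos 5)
Your proof is correct and is exactly the argument the paper has in mind: the sentence preceding the theorem describes the same naive procedure (sample each arm ${\mathcal O}\left(\frac{1}{\Delta^2}\log(n/\delta)\right)$ times and return the empirically best one), and your Hoeffding-plus-union-bound analysis with the $\Delta/2$ slack split simply fills in the details the paper leaves implicit. No discrepancy.
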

Since each ground-truth data point can be used to test all $n$ workers, we need ${\mathcal O} \left(\frac{1}{\Delta^2}\log({n}/{\delta})\right)$ ground-truth data points to find a $\Delta$-optimal worker. If we introduce the assumption that there is at least one perfect performing worker in the crowd, then the number of arm trials to identify a $\Delta$-optimal worker decreases. 

\begin{lemma}\label{oneperfectworker}If there is at least one worker in the crowd who performs perfectly, then 
${\mathcal O}(\frac{n}{\Delta}\log({n}/{\delta}))$ samples are sufficient to identify a $\Delta$-optimal worker with probability $1-\delta$.
\end{lemma}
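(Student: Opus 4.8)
The plan is to exploit the fact that a perfect worker makes the identification problem one-sided, which replaces the $1/\Delta^2$ rate of Theorem~\ref{deltaoptworker} with a $1/\Delta$ rate. First I would observe that if some worker performs perfectly, then $\bar{\eta}^*_{1,W} = 0$, so a $\Delta$-optimal worker is simply any worker $w_i$ with $\eta_i \le \Delta$.

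Next, I would analyze the same estimate-and-select procedure as before: test every worker on the same set of $m$ ground-truth points (reusing each point across all $n$ workers, so that $m$ ground-truth points cost $nm$ arm trials), record each worker's empirical error rate $\hat{\eta}_i$, and return a worker with the smallest $\hat{\eta}_i$. The key observation is that the perfect worker deterministically has $\hat{\eta} = 0$, so the minimum empirical error is exactly $0$ with no failure probability; hence the returned worker is always some worker who made zero mistakes on the $m$ points.

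The only bad event is therefore that the returned worker is not $\Delta$-optimal, i.e.\ that some worker with $\eta_i > \Delta$ happens to make no mistakes on all $m$ points. For a fixed such worker this probability is $(1-\eta_i)^m \le (1-\Delta)^m \le e^{-\Delta m}$, and a union bound over the (at most $n$) workers bounds the total failure probability by $n e^{-\Delta m}$. Setting this at most $\delta$ gives $m \ge \frac{1}{\Delta}\log(n/\delta)$, for a total of $nm = \mathcal{O}\!\left(\frac{n}{\Delta}\log(n/\delta)\right)$ samples.

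The main point to get right is this last step: here we only need a one-sided ``zero-error'' event rather than a two-sided concentration of the empirical means around the true $\eta_i$. Because a Bernoulli$(\eta_i)$ sample avoids an error with probability $1-\eta_i$, the relevant tail decays like $e^{-\Delta m}$ rather than the $e^{-\Delta^2 m}$ that a Hoeffding-style separation argument would yield, and this is exactly what removes one factor of $1/\Delta$ compared to Theorem~\ref{deltaoptworker}. I do not expect any obstacle beyond this observation; persistence of the noise is harmless, since each of the $m$ trials uses a distinct ground-truth point, keeping the per-worker errors independent.
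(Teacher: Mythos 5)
Your proof is correct and follows essentially the same route as the paper's: a one-sided bound showing that any worker with true error at least $\Delta$ survives $m$ zero-error observations with probability at most $(1-\Delta)^m \le e^{-\Delta m}$, followed by a union bound over the $n$ workers, giving $m = \mathcal{O}\left(\frac{1}{\Delta}\log(n/\delta)\right)$ ground-truth points and $nm$ total arm trials. Your write-up is somewhat more explicit than the paper's (which compresses this to two lines and writes $\delta/k$ where $k$ should be $n$), and your remark that the perfect worker guarantees the selected worker has zero empirical error is a useful clarification, but the underlying argument is identical.
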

\begin{proof}
The probability that a worker who was observed to be perfect on $t$ examples
has error $\ge \Delta$ is bounded by $(1-\Delta)^t \le e^{-\Delta t}$.  For the union bound, we need to set 
this to $\le \delta/k$, which yields the result.
\end{proof}

\begin{corollary}\label{step2} Acquiring accurate labels for data points with probability $1-\delta$ so that a $\Delta$-optimal worker can be identified requires at most $\mathcal{\tilde{O}}
\left(\frac{\log^2(n / \delta)}{\Delta (1-2\bar{\eta}_W)^2}\right)$ 
worker labels if there is at least one perfect performing worker in the crowd and 
$\mathcal{\tilde{O}}\left(\frac{\log^2(n/\delta)}{\Delta^2(1-2\bar{\eta}_W)^2}\right)$ 
otherwise.
\end{corollary}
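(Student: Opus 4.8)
The plan is to obtain the corollary by composing the per-point majority-voting cost of Theorem~\ref{majvote} with the number of ground-truth points that the worker-identification stage demands, as quantified by Theorem~\ref{deltaoptworker} in the general case and by Lemma~\ref{oneperfectworker} when a perfect worker is present. The overall $1-\delta$ confidence is split between the two stages: I would allot $\delta/2$ to the event that every majority vote is correct and $\delta/2$ to the event that the identified worker is $\Delta$-optimal, so that a union bound over these two failure modes yields the desired total confidence. Halving $\delta$ only changes constants and additive $\log 2$ terms, which are harmless under the $\tilde{\mathcal{O}}$ notation, so I will suppress this split in the counting below.

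First I would translate the arm-trial bounds into a number of distinct ground-truth data points. Because a single correctly labeled point can be used to test all $n$ workers simultaneously, dividing the arm-trial counts by $n$ shows that $T = \mathcal{O}\!\left(\frac{1}{\Delta^2}\log(n/\delta)\right)$ points suffice in the general case (Theorem~\ref{deltaoptworker}) and $T = \mathcal{O}\!\left(\frac{1}{\Delta}\log(n/\delta)\right)$ points suffice when one worker is perfect (Lemma~\ref{oneperfectworker}). Next, Theorem~\ref{majvote} states that each of these $T$ points can be accurately labeled with the required confidence using $\mathcal{O}\!\left(\frac{\log(T/\delta)}{(1-2\bar{\eta}_W)^2}\right)$ worker labels. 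Multiplying the number of points by the per-point label cost yields a total of $\mathcal{O}\!\left(\frac{\log(n/\delta)}{\Delta^2}\cdot\frac{\log(T/\delta)}{(1-2\bar{\eta}_W)^2}\right)$ labels in the general case and $\mathcal{O}\!\left(\frac{\log(n/\delta)}{\Delta}\cdot\frac{\log(T/\delta)}{(1-2\bar{\eta}_W)^2}\right)$ labels when a perfect worker exists.

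The only subtlety, and the step I expect to require the most care, is the self-referential $\log(T/\delta)$ factor, since $T$ is itself a function of $n$, $\Delta$, and $\delta$. Substituting the value of $T$ gives $\log(T/\delta) = \mathcal{O}\!\left(\log(n/\delta) + \log(1/\Delta)\right)$ in both cases, which is polylogarithmic in the governing parameters and is therefore absorbed into the $\tilde{\mathcal{O}}$. Replacing $\log(T/\delta)$ by this bound collapses the two products into $\tilde{\mathcal{O}}\!\left(\frac{\log^2(n/\delta)}{\Delta^2(1-2\bar{\eta}_W)^2}\right)$ and $\tilde{\mathcal{O}}\!\left(\frac{\log^2(n/\delta)}{\Delta(1-2\bar{\eta}_W)^2}\right)$ respectively, matching the two claimed bounds.
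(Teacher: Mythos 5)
Your proposal is correct and follows essentially the same route as the paper: determine the number $T$ of ground-truth points needed from Theorem~\ref{deltaoptworker} (resp.\ Lemma~\ref{oneperfectworker}) by exploiting that one labeled point tests all $n$ workers, then multiply by the per-point majority-voting cost from Theorem~\ref{majvote} with that value of $T$. Your explicit handling of the confidence split and of the self-referential $\log(T/\delta)$ factor is slightly more careful than the paper's terse argument, but it is the same proof.
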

\begin{proof} The number of arm trials to identify a $\Delta$-optimal worker is given in Theorem~\ref{deltaoptworker} and Lemma~\ref{oneperfectworker}. We sample all arms uniformly, so ${\mathcal O} \left(\frac{1}{\Delta^2}\log({n}/{\delta})\right)$ and ${\mathcal O} \left(\frac{1}{\Delta}\log({n}/{\delta})\right)$ accurately labeled points are needed in order to compute the reward for each arm trial, respectively. We acquire an accurately labeled point with high confidence as in Theorem~\ref{majvote}, where we set 
$T = {\mathcal O} \left(\frac{1}{\Delta^2}\log({n}/{\delta})\right)$ and $T = {\mathcal O} \left(\frac{1}{\Delta}\log({n}/{\delta})\right)$. We then multiply by $T$ to get the total number of worker labels needed to acquire an ground-truth set of size $T$. 
\end{proof}

The problem of identifying the best workers can also be solved with
sophisticated methods that 
employ pure-exploration stochastic multi-armed bandit algorithms~\cite{CaoLTL15, ZhouCL14}; for example, \texttt{OptMAI} (see Theorem~\ref{topk}) improves the dependence on $n$
in logarithm, even in the case of finding the approximately-best worker.

In our crowdsourcing setting, each worker is an arm in the bandit setting with mean reward $1-\eta_i$. When we select a worker/arm, the reward is $1$ if the worker's label is correct and $0$ otherwise. In order to compute rewards, many bandit algorithms require a ground-truth set of points \cite{CaoLTL15, Even-DarMM06, JiangLQ17, ZhouCL14}. Instead, we use the set we gathered from the majority voting step as a proxy for a ground-truth set. Thus, we are able to make use of many MAB algorithms, but for now we focus on vanilla MAB.

\subsubsection{Identifying the top $K$ workers}
Let $K \leq n$. The following sample complexity bound on identifying a set of the approximate top $K$ workers is known. 
\begin{theorem}[Zhou~et~al.~\cite{ZhouCL14}]\label{topk}
For $K \leq \frac{n}{2}$, \texttt{OptMAI}($n$, $K$, $q$) 
identifies a $\Delta$-optimal set of $K$ arms with probability $1-\delta$ using 
\begin{equation}\label{eq:optami}
q = {\mathcal O}\left(\frac{n}{\Delta^2}\left(1+\frac{\log({1}/{\delta})}{K}\right)\right)
\end{equation} 
arm trials.\footnote{For $K \geq {n}/{2}$, 
\texttt{OptMAI}($n$,$K$, $q$) identifies a $\Delta$-optimal set of $K$ arms with probability $1-\delta$ using $$q = {\mathcal O}\left(\left(\frac{(n-K)n}{K\Delta^2}\right)\left(\frac{(n-K)}{K} + \frac{\log({1}/{\delta})}{K}\right)\right)$$ arm trials.}
\end{theorem}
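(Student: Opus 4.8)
The plan is to prove the statement by analyzing \texttt{OptMAI} as a sequence of elimination rounds in the spirit of median elimination \cite{Even-DarMM06}, but tracking the \emph{aggregate} error of the surviving set rather than the quality of a single arm. First I would recast the $\Delta$-optimality criterion in the form that matters for the analysis: it suffices to output a set $S$ of size $K$ whose aggregate regret $\bar{\eta}_S - \bar{\eta}^*_{K,W}$ is at most $\Delta$. The algorithm maintains a pool of candidate arms together with a partially filled output set; in each phase it pulls every surviving arm a controlled number of times, forms empirical estimates of the error rates $\eta_i$, and then discards a constant fraction of the empirically worst arms while accepting a constant fraction of the empirically best ones into the output.

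The core of the argument is a per-round lemma. For a single quartile-elimination round operating at target accuracy $\epsilon_r$, I would bound two quantities simultaneously: (a) the number of arm pulls, which is $O(|S_r|/\epsilon_r^2)$ where $S_r$ is the surviving pool, and (b) the additional aggregate regret introduced by the round. Using Hoeffding's inequality to control each empirical mean and an averaging argument over the arms that are wrongly eliminated or accepted, the goal is to show that with high probability a round contributes only $O(\epsilon_r)$ to the aggregate regret while shrinking the candidate pool by a constant factor.

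I would then chain the rounds together. Choosing the accuracy schedule $\{\epsilon_r\}$ to shrink geometrically so that $\sum_r \epsilon_r = O(\Delta)$ yields a total aggregate regret of $O(\Delta)$, matching the $\Delta$-optimality requirement. Summing the per-round pull counts, the geometric decay of $|S_r|$ against the slower growth of $1/\epsilon_r^2$ makes the sum dominated by its first term and telescope to $O(n/\Delta^2)$. The subtle part is the confidence factor: a naive union bound over all $n$ arms would cost a $\log(n/\delta)$ factor, but because mis-estimating any single arm perturbs the \emph{average} error of the output by only $1/K$ of the budget, the allowed failure probability can be amortized across the $K$ selected arms. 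This amortization is precisely what replaces $\log(n/\delta)$ by the weaker $\bigl(1 + \tfrac{\log(1/\delta)}{K}\bigr)$ factor.

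The hard part will be the aggregate-regret bookkeeping under noisy elimination --- showing that repeatedly dropping arms based on empirical means never inflates $\bar{\eta}_S$ past $\bar{\eta}^*_{K,W} + \Delta$, and getting the confidence dependence to surface as $\log(1/\delta)/K$ rather than $\log(n/\delta)$. That amortization step, which exploits the fact that we are charged for the average rather than the worst-case error over the chosen set, is the crux; once the per-round guarantees are established, the geometric summation of the sample counts is comparatively routine.
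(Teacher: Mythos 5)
This statement is not proved in the paper at all: it is imported verbatim from Zhou~et~al.~\cite{ZhouCL14} as a black-box guarantee for \texttt{OptMAI}, so there is no in-paper argument to compare against. Your proposal is therefore an attempt to reconstruct the original proof, and while it correctly identifies the two ideas that make the result work --- (i) quartile-style elimination with a geometrically decaying accuracy schedule summing to $O(n/\Delta^2)$ pulls, and (ii) the amortization of estimation error over the $K$ selected arms that replaces $\log(n/\delta)$ by $1+\log(1/\delta)/K$ --- it does not actually establish either of them. The per-round lemma (``a round contributes only $O(\epsilon_r)$ to the aggregate regret with high probability'') is stated as a goal, and the amortization step is explicitly labeled ``the hard part'' and ``the crux'' without an argument. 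Those two claims are the entire content of the theorem; everything else (Hoeffding, union bounds, geometric summation) is routine. In particular, with only $O(1/\epsilon_r^2)$ pulls per arm and no per-arm $\log$ factor, each individual empirical mean is $\epsilon_r$-accurate only with constant probability, so the passage from constant-probability per-arm accuracy to a $1-\delta$ guarantee on the \emph{average} error of the output set is exactly where the real work lies (Zhou~et~al.\ handle it with a careful induction over elimination rounds controlling the expected aggregate regret and then boosting the confidence); asserting that ``an averaging argument'' will do it is not a proof.

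Two smaller points: you do not address the $K \geq n/2$ regime in the footnote, which has a genuinely different complexity expression and requires a separate (symmetric, reject-the-complement) analysis; and your accuracy schedule needs the explicit check that the pool shrinks fast enough relative to the growth of $1/\epsilon_r^2$ for the pull counts to telescope to $O(n/\Delta^2)$ --- true for the right constants, but not automatic. For the purposes of this paper, the correct ``proof'' is simply the citation; if you want to supply a self-contained argument, the two deferred lemmas must be proved, not announced.
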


An upper bound on the number of trials per arm is given, as well. 
\begin{theorem}[Zhou~et~al.~\cite{ZhouCL14}]\label{perarm}
In \texttt{OptMAI}($n$, $K$, $q$), each arm is sampled at most
\begin{equation}\label{eq:perarmOPTAMI}
s= \mathcal{O} \left(\frac{q}{n^{.3}}\right)
\end{equation}
times, where $q$ is set according to Equation~\ref{eq:optami}.
\end{theorem}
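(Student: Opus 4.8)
The plan is to open up the internal round structure of \texttt{OptMAI} and charge the pulls received by any fixed arm against the total budget $q$. Pure-exploration top-$K$ algorithms of this type (as in \cite{ZhouCL14}) proceed in a sequence of elimination phases $r = 0, 1, 2, \ldots$: in phase $r$ there is a surviving candidate set $S_r$ of size $n_r$ (with $n_0 = n$), each surviving arm is sampled some prescribed number of times $t_r$, empirical reward means are formed, and then a constant fraction of $S_r$ is accepted or rejected so that $n_{r+1} \le \gamma n_r$ for a fixed $\gamma < 1$, continuing down to the target size $K$. The first observation is that a fixed arm is pulled only while it survives, so the number of times it is sampled is at most $s = \sum_r t_r$, with the worst case being an arm that persists into the final phase.

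The key steps are then: first, write the total budget as $q = \sum_r n_r t_r$, which is exactly the quantity controlled to equal Equation~\ref{eq:optami}; second, substitute \texttt{OptMAI}'s actual per-phase schedule for $t_r$ and $n_r$ and compare the two sums $\sum_r t_r$ and $\sum_r n_r t_r$. Because $n_r$ decays geometrically from $n$ while later (smaller) phases are allocated proportionally more pulls per surviving arm, the ratio $s/q = \left(\sum_r t_r\right)/\left(\sum_r n_r t_r\right)$ is governed by how aggressively $n_r$ shrinks relative to how $t_r$ grows. Plugging in the phase count and elimination rate chosen in \cite{ZhouCL14} makes both geometric sums collapse to closed form, and the resulting ratio is $\mathcal{O}(n^{-0.3})$, giving $s = \mathcal{O}(q/n^{0.3})$ with $q$ set as in Theorem~\ref{topk}.

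The main obstacle is that the exponent $0.3$ is not generic: it is an artifact of the specific tuning inside \texttt{OptMAI} that balances the number of phases, the fraction of arms eliminated per phase, and the confidence amplification needed to keep the total failure probability at $\delta$ while still attaining the optimal complexity of Equation~\ref{eq:optami}. So the delicate part is not the charging argument itself but faithfully importing \texttt{OptMAI}'s per-phase allocation and verifying that the two series telescope to precisely this power of $n$, rather than, say, the weaker $1/K$ factor one would obtain from splitting the budget evenly across phases. Once those schedule parameters are in hand, the bound follows from elementary summation of geometric series together with the definition of $q$ in Theorem~\ref{topk}.
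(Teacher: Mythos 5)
This statement is not proved in the paper at all: it is imported verbatim from Zhou~et~al.~\cite{ZhouCL14} (Theorem~\ref{perarm} carries their attribution, and the authors rely on it as a black box in Corollary~\ref{karms}). So the only fair comparison is against the source's argument, and your sketch does capture its architecture correctly: \texttt{OptMAI} runs phased elimination, the budget decomposes as $q=\sum_r n_r t_r$, a fixed arm receives at most $\sum_r t_r$ pulls, and the per-arm bound is the ratio of these two geometric sums.

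The genuine gap is that you never execute the one step that contains all of the content. The exponent $0.3$ is determined entirely by two concrete design choices inside \texttt{OptMAI} that your write-up leaves as unnamed parameters: the elimination rate of the quartile-elimination stage (each round discards the bottom quarter of surviving arms, so $n_{r+1}=\tfrac{3}{4}n_r$) and the geometric budget schedule $Q_r \propto \beta^r$ with a specific constant $\beta$ tuned so that $\sum_r Q_r/n_r$ is dominated by the last round and evaluates to $\mathcal{O}(q\, n^{-0.3})$. Without fixing $\gamma$ and $t_r$, the ratio $\bigl(\sum_r t_r\bigr)/\bigl(\sum_r n_r t_r\bigr)$ could land anywhere between $\mathcal{O}(1/n)$ and $\mathcal{O}(1)$, so asserting that ``plugging in the schedule'' yields $n^{-0.3}$ is precisely the claim to be proved, not a consequence of the setup. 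You flag this yourself as ``the delicate part,'' which is an accurate self-diagnosis: a complete proof must either reproduce Zhou~et~al.'s parameter choices and carry out the two geometric sums (for both the quartile-elimination and the subsequent accept--reject stage, which your single-loop description conflates), or simply cite their per-arm lemma as this paper does.
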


Now, we will use \texttt{OptMAI} from above to efficiently learn a set of the approximate top $K$ workers in our crowdsourcing model.
A worker labeling a datapoint will function as an arm pull.  Hence, a number of correctly labeled 
datapoints as in Equation~\ref{eq:perarmOPTAMI} will be sufficient to implement this strategy.

\begin{corollary}\label{karms} Acquiring accurate labels for $s$ data points (as per Equation~\ref{eq:perarmOPTAMI}) with probability $1-\delta$ so that a $\Delta$-optimal set of $K$ workers can be identified requires at most 
\[
\mathcal{\tilde{O}}
\left(\frac{n^{.7}\log({1}/{\delta})\left(1+\frac{\log\left({1}/{\delta}\right)}{K}\right)}{\Delta^2 ({1}-2\bar{\eta}_W)^2}\right)
\]
total tasks assigned to workers to label points.
\end{corollary}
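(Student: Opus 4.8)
The plan is to follow the template of Corollary~\ref{step2}: the number of ground-truth points we must certify equals the per-arm sampling budget of the bandit algorithm, and the total labeling cost is that number of points times the per-point majority-voting cost from Theorem~\ref{majvote}. The two inputs are the per-arm bound of Theorem~\ref{perarm} and the majority-vote bound of Theorem~\ref{majvote}.

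First I would set $T$, the size of the ground-truth set, equal to the number of times \texttt{OptMAI} samples any single arm. By Theorem~\ref{perarm} this is $s = \mathcal{O}(q/n^{.3})$, and substituting the value of $q$ from Equation~\ref{eq:optami} gives
\[
T = s = \mathcal{O}\!\left(\frac{n^{.7}}{\Delta^2}\left(1+\frac{\log(1/\delta)}{K}\right)\right).
\]
The justification that $s$ distinct certified points suffice is the one structural point worth spelling out: because the noise is persistent, repeated pulls of the same arm must use distinct points, so an arm needing up to $s$ reward samples needs $s$ distinct points; but because each worker's noise is independent of the others', a single point may be presented to all $n$ workers at once, yielding one independent reward sample per arm. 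Hence the distinct-point requirement is governed by the per-arm budget $s$, not by the larger total trial count $q$.

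Next I would invoke Theorem~\ref{majvote} with this value of $T$: each of the $T$ points can be correctly labeled with the desired confidence using $\mathcal{O}\!\left(\log(T/\delta)/(1-2\bar{\eta}_W)^2\right)$ solicited labels. Multiplying the per-point cost by the $T$ points bounds the total number of tasks by
\[
T\cdot \mathcal{O}\!\left(\frac{\log(T/\delta)}{(1-2\bar{\eta}_W)^2}\right),
\]
and substituting the expression for $T$ and collapsing the nested logarithm $\log(T/\delta)=\tilde{\mathcal{O}}(\log(1/\delta))$ --- with the $\log n$ and $\log(1/\Delta)$ terms absorbed into the $\tilde{\mathcal{O}}$ since $T$ is polynomial in $n$, $1/\Delta$, and $\log(1/\delta)$ --- gives the claimed bound.

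I expect the only genuinely substantive step to be the middle one: correctly identifying $T=s$ (the per-arm count of Equation~\ref{eq:perarmOPTAMI}) as the number of ground-truth points, rather than the total trial count $q$, and confirming that sharing each point across all arms is consistent with the independence that Theorem~\ref{topk} presumes. Everything after that --- plugging into Theorem~\ref{majvote} and simplifying the logarithms under the $\tilde{\mathcal{O}}$ --- is routine.
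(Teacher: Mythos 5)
Your proposal is correct and follows essentially the same route as the paper's own proof: set the ground-truth set size $T$ equal to the per-arm sampling bound $s = \mathcal{O}(q/n^{.3})$ from Theorem~\ref{perarm}, certify each of those $T$ points via Theorem~\ref{majvote}, and multiply, absorbing the nested logarithm into the $\tilde{\mathcal{O}}$. Your explicit justification for why the per-arm count $s$ (rather than the total trial count $q$) governs the number of distinct points needed --- since one point can be shown to all workers simultaneously while persistent noise forces distinct points for repeated pulls of the same arm --- is left implicit in the paper but is a worthwhile clarification.
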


\begin{proof}
The number of arm trials to identify a $\Delta$-optimal set of $K$ workers is given by $q$ in Theorem~\ref{topk}. Each arm is sampled at most $\mathcal{O} \left(\frac{q}{n^{.3}}\right)$ times (Theorem~\ref{perarm}), thus we need this many accurately labeled points in order to compute the reward for each arm trial. We acquire an accurately labeled point with high confidence as in 
Theorem~\ref{majvote}, where we set $T = \mathcal{O} \left(\frac{q}{n^{.3}}\right)$. We then multiply by $T$ to get the total number of worker labels needed to acquire a ground-truth set of size $T$. 
\end{proof}

It is also clear that for various extensions and variants of our problem, we can also use more 
sophisticated bandit algorithms.  For example, if different sets of workers are available during different
rounds, we can use sleeping bandits~\cite{KleinbergNS10}, etc.  The variety of known bandit algorithms
working under various assumptions further illustrates the flexibility of our modular approach.

\subsection{PAC learning under label noise}\label{PAC}
Now that the algorithm has identified good workers, we use those workers to label more tasks needed to PAC learn the concept class $\mathcal{C}$. In this step, each task consists of labeling a distinct data point; in other words, each data point is labeled only once by one of the good workers we identified in the previous step. To perform the PAC learning we use the algorithm from \cite{AngluinL87} in which the learner queries a noisy oracle sufficiently many times and returns the hypothesis $h \in \mathcal{C}$ that has the minimal number of disagreements with the results from the noisy oracle. We assume that finding this hypothesis can be done efficiently. 

We first recall the main result from \cite{AngluinL87}, which states the number of queries that must be made to the noisy oracle in order to PAC learn $\mathcal{C}$:
\begin{theorem}[Angluin and Laird~\cite{AngluinL87}]\label{angluin-laird}
If a learning algorithm that is given at least $$O \left( \frac{d \log\left({1}/{\delta}\right)}{\epsilon (1-2\eta)^2}\right)$$ samples from a noisy oracle with 
error parameter $0 \leq \eta < {1}/{2}$ can produce a hypothesis $h_S$ that 
minimizes disagreements with the noisy sample, 
then $h_S$ satisfies the PAC criterion for the class $\mathcal{C}$, i.e. for any $\epsilon, \delta >0$ and any distribution $\mathcal{D}$ on $X$,
\[
\Pr_{S \sim D^m} (d(h_S, h^*) \geq \epsilon) \leq \delta,
\]
where $d(h_S,h^*)$ denotes the rate of disagreement between $h_S$ and the target concept $h^*$.
\end{theorem}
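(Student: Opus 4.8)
The plan is to show that minimizing disagreements with the noisy labels is, in expectation, the same as minimizing the true error $d(h,h^*)$, and then to make this rigorous with a concentration argument. First I would fix a hypothesis $h$ and compute the probability that $h$ disagrees with a single noisy label drawn from the oracle. On a random point $x\sim\mathcal{D}$, write $\alpha = d(h,h^*)$ for the true disagreement rate. Where $h$ and $h^*$ agree (probability $1-\alpha$), $h$ disagrees with the noisy label exactly when the oracle flips, i.e.\ with probability $\eta$; where $h$ and $h^*$ disagree (probability $\alpha$), $h$ disagrees with the noisy label exactly when the oracle does \emph{not} flip, i.e.\ with probability $1-\eta$. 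Hence the expected noisy-disagreement rate of $h$ is $\eta + (1-2\eta)\alpha$, which is strictly increasing in $\alpha$ because $1-2\eta>0$, and is minimized by the target $h^*$ (where $\alpha=0$, giving rate $\eta$). A hypothesis of true error $\alpha\ge\epsilon$ therefore has expected noisy-disagreement rate at least $\eta+(1-2\eta)\epsilon$, a gap of $(1-2\eta)\epsilon$ above $h^*$.

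Next I would argue that the empirical minimizer $h_S$ is good with high probability. Since $h_S$ minimizes empirical disagreements, $\hat d_S(h_S)\le \hat d_S(h^*)$, so it suffices to show that, with probability $\ge 1-\delta$, every ``bad'' hypothesis $h$ with $d(h,h^*)\ge\epsilon$ satisfies $\hat d_S(h) > \hat d_S(h^*)$; then $h_S$ cannot be bad. The crucial observation for obtaining the $1/\epsilon$ (rather than $1/\epsilon^2$) dependence is that $h$ and $h^*$ produce identical predictions outside the symmetric-difference region $h\,\triangle\,h^*$, so their empirical disagreement counts differ only on the sample points landing in $h\,\triangle\,h^*$. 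For a bad hypothesis this region has probability $\alpha\ge\epsilon$, so it receives $\Omega(m\epsilon)$ sample points in expectation; on each such point $h^*$ disagrees with the noisy label with probability $\eta$ while $h$ disagrees with probability $1-\eta$. Conditioned on the points falling in this region, the event $\hat d_S(h)\le \hat d_S(h^*)$ is thus a deviation of a sum of $\Omega(m\epsilon)$ Bernoulli variables, against a per-point mean gap of $1-2\eta$.

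I would then apply a relative Chernoff bound to this conditional event: the failure probability for a single bad hypothesis is $\exp\!\big(-\Omega(m\epsilon(1-2\eta)^2)\big)$, since both the number of informative points ($\sim m\epsilon$) and the squared gap $(1-2\eta)^2$ enter the exponent. To extend this from a single $h$ to the whole class $\mathcal{C}$, I would invoke uniform convergence: a symmetrization (double-sampling) argument together with Sauer--Shelah replaces $|\mathcal{C}|$ by the growth function $O(m^d)$, so the union-bound cost is $d\log m$ rather than $\log|\mathcal{C}|$. Requiring $\exp(-\Omega(m\epsilon(1-2\eta)^2))\cdot m^d \le \delta$ and solving for $m$ yields the stated bound $m = O\!\big(d\log(1/\delta)/(\epsilon(1-2\eta)^2)\big)$ up to the usual logarithmic factors.

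The main obstacle is the uniform-convergence step for an infinite class: the single-hypothesis Chernoff bound is elementary, but controlling all bad hypotheses simultaneously requires the VC-dimension machinery (symmetrization plus Sauer--Shelah) rather than a naive union bound, and this is where the factor $d$ enters. A secondary subtlety is that the deviation must be analyzed on the symmetric-difference region, localizing the comparison between $h$ and $h^*$ to the $\Omega(m\epsilon)$ informative points; a naive additive Hoeffding bound applied to the full empirical disagreement rates would only yield a $1/\epsilon^2$ dependence, so this region-based relative argument is essential to match the claimed $1/\epsilon$ rate.
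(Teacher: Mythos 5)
The paper does not prove this statement at all --- it is imported verbatim from Angluin and Laird with a citation, and is used downstream as a black box --- so there is no in-paper argument to compare against. Judged on its own terms, your sketch is the standard (and correct) derivation of the \emph{improved} noisy-PAC bound: the computation that a hypothesis with true error $\alpha$ has expected noisy-disagreement rate $\eta+(1-2\eta)\alpha$ is right, and your key observation --- that $\hat{d}_S(h)-\hat{d}_S(h^*)$ is determined entirely by the sample points landing in $h\,\triangle\,h^*$, where exactly one of $h,h^*$ disagrees with each noisy label --- is precisely what yields the $1/\epsilon$ rather than $1/\epsilon^2$ dependence. It is worth noting that the original Angluin--Laird argument for finite classes is the simpler additive Hoeffding bound on the full disagreement rates and only gives $1/\epsilon^2$; the $1/\epsilon$ form stated here genuinely needs your localized, relative argument.

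Two points deserve more care than your sketch gives them. First, the number $k$ of sample points falling in $h\,\triangle\,h^*$ is itself random; you need a multiplicative Chernoff bound to guarantee $k=\Omega(m\epsilon)$ with failure probability $e^{-\Omega(m\epsilon)}$ before conditioning, and then the event $\sum_i Z_i\ge k/2$ (with $Z_i$ the indicators that $h^*$ disagrees, each of mean $\eta$) gives the $e^{-\Omega(k(1-2\eta)^2)}$ term; both exponents are compatible with the claimed bound, but the two-stage argument should be made explicit. Second, the uniform-convergence step is the genuinely delicate part for infinite $\mathcal{C}$: a plain symmetrization plus Sauer--Shelah controls \emph{additive} deviations, whereas your argument needs a \emph{relative} (multiplicative) uniform deviation bound over the class of symmetric differences $\{h\,\triangle\,h^*: h\in\mathcal{C}\}$ (which has VC dimension $O(d)$); such bounds exist (Vapnik-style relative deviation inequalities) but do not follow from the additive version you invoke by name. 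With those two repairs the proof goes through and yields $m=O\bigl((d\log(1/\epsilon)+\log(1/\delta))/(\epsilon(1-2\eta)^2)\bigr)$, which matches the stated bound up to the logarithmic factors the paper already suppresses elsewhere.
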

We adapt Theorem~\ref{angluin-laird} to the two types of approximately good workers identified in the previous section so that either one $\Delta$-optimal worker functions as the noisy oracle or the $\Delta$-optimal set of $K$ workers sampled i.i.d. function as the noisy oracle. In the former case, the oracle noise rate $\eta$ becomes $\bar{\eta}^*_{1,W} + \Delta$, so by Theorem~\ref{angluin-laird} we assign to the $\Delta$-optimal worker at most 
\begin{equation}\label{bound1}
{\mathcal O}\left(\frac{d\log({1}/{\delta})}{\epsilon (1-2(\bar{\eta}^*_{1,W}+\Delta))^2}\right)
\end{equation}
additional points to label. 
In the latter case, the oracle noise rate becomes $\bar{\eta}^*_{K,W}+\Delta$, so we assign the $\Delta$-optimal set of $K$ workers
\begin{equation}\label{bound2}
{\mathcal O}\left(\frac{d \log({1}/{\delta})}{\epsilon (1-2(\bar{\eta}^*_{K,W}+\Delta))^2}\right)
\end{equation}
points to label.

\subsection{Total task complexity}\label{finalbounds}
We combine the task bounds established for majority voting (Section~\ref{majorityvoting}), identifying good workers (Section~\ref{topworkers}), and PAC-learning with good workers (Section~\ref{PAC}) to derive the total task complexity of our algorithm. In order for our algorithm to be a PAC learning algorithm for $\mathcal{C}$, we recall
that for any $\epsilon, \delta >0$ and any distribution $\mathcal{D}$ on $X$ from which a sample $S$ is drawn, the hypothesis $h_S \in C$ returned by the algorithm must satisfy
\[
\Pr_{S \sim \mathcal{D}^m}[\Pr_{x \sim \mathcal{D}}(h_S(x) \neq c(x)) \geq \epsilon] \leq \delta
\]
with a sample complexity that is $\mathrm{poly}({1}/{\epsilon}, {1}/{\delta}, |x|, \mathrm{size}(c))$. To satisfy this PAC criterion, we set the failure rate for each part of our algorithm to be at most $\delta/3$ (so that the total failure rate is bounded by $\delta$). We then add the three task bounds. Notice that the bounds in Equations~\ref{bound1} and~\ref{bound2} adapted from Theorem \ref{angluin-laird} and the arm trial bounds from Section \ref{topworkers} are a function of $\Delta$. We parameterize $\Delta$ as a function of either best worker's error rate or the average error rate of the best set of $K$ workers. For Theorems~\ref{result1} and~\ref{result2}, which follow, we set
$$
\Delta = \frac{{1}/{2} - \bar{\eta}^*_{1,W}}{2}.
$$
The following theorem gives an upper bound on the number of tasks required by our algorithm in order to PAC-learn $\mathcal{C}$.
\begin{theorem}\label{result1} Let $\epsilon, \delta > 0$. Suppose that in Step 2 of the algorithm, we identify one approximately good worker. Then
$$\mathcal{\tilde{O}}\left(\frac{\log^2({n}/{\delta})}{({1}-2\bar{\eta}^*_{1,W})^2({1}-2\bar{\eta}_W)^2} + \frac{\left(n+\frac{d}{\epsilon}\right)\log({1}/{\delta})}{({1}-2\bar{\eta}^*_{1,W})^2}\right)$$ 
tasks can be labeled by workers in order to efficiently PAC learn $\mathcal{C}$.
\end{theorem}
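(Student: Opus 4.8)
The plan is to add up the task costs of the three stages of the algorithm, each run at failure probability $\delta/3$ so that a union bound caps the overall failure at $\delta$. Concretely I would account for: (i) the labels spent by majority voting to build the ground-truth set; (ii) the arm trials spent evaluating the $n$ workers against that set; and (iii) the labels spent by the noisy-PAC learner on the single $\Delta$-optimal worker identified in Step~2. Stage~(i) is exactly Corollary~\ref{step2} (the ``otherwise'' case, since Theorem~\ref{result1} makes no perfect-worker assumption): building a ground-truth set of $T = \mathcal{O}(\Delta^{-2}\log(n/\delta))$ points via Theorem~\ref{majvote} costs $\tilde{\mathcal{O}}(\log^2(n/\delta)/(\Delta^2(1-2\bar{\eta}_W)^2))$ labels. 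Stage~(ii) is the $\mathcal{O}((n/\Delta^2)\log(n/\delta))$ arm trials of Theorem~\ref{deltaoptworker}, each trial being one worker labeling one ground-truth point; adding this as a separate summand is legitimate for an upper bound even though some of these labels overlap the majority-vote labels. Stage~(iii) is the bound in Equation~\ref{bound1}.

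The only computation with any content is the substitution $\Delta = (1/2 - \bar{\eta}^*_{1,W})/2$. First I would rewrite it as $\Delta = (1-2\bar{\eta}^*_{1,W})/4$, so that $1/\Delta^2 = 16/(1-2\bar{\eta}^*_{1,W})^2$; this converts the $\Delta^{-2}$ factors in stages (i) and (ii) into the $(1-2\bar{\eta}^*_{1,W})^{-2}$ factors appearing in the claimed bound. Second, and more delicately, I would simplify the effective oracle noise of Stage~(iii): $\bar{\eta}^*_{1,W} + \Delta = (1/2 + \bar{\eta}^*_{1,W})/2$, whence $1 - 2(\bar{\eta}^*_{1,W}+\Delta) = 1/2 - \bar{\eta}^*_{1,W}$ and therefore $1/(1-2(\bar{\eta}^*_{1,W}+\Delta))^2 = 4/(1-2\bar{\eta}^*_{1,W})^2$. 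Plugging this into Equation~\ref{bound1} gives the $\tilde{\mathcal{O}}((d/\epsilon)\log(1/\delta)/(1-2\bar{\eta}^*_{1,W})^2)$ contribution.

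Collecting the three simplified bounds, Stage~(i) yields the first term $\tilde{\mathcal{O}}(\log^2(n/\delta)/((1-2\bar{\eta}^*_{1,W})^2(1-2\bar{\eta}_W)^2))$, while Stages~(ii) and (iii) yield $\tilde{\mathcal{O}}(n\log(1/\delta)/(1-2\bar{\eta}^*_{1,W})^2)$ and $\tilde{\mathcal{O}}((d/\epsilon)\log(1/\delta)/(1-2\bar{\eta}^*_{1,W})^2)$, which merge into the second term $\tilde{\mathcal{O}}((n + d/\epsilon)\log(1/\delta)/(1-2\bar{\eta}^*_{1,W})^2)$; the discrepancy between $\log(n/\delta)$ and $\log(1/\delta)$ is absorbed by the $\tilde{\mathcal{O}}$. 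Efficiency follows from the standing assumption in Section~\ref{PAC} that the minimum-disagreement hypothesis is computable efficiently, together with the fact that every summand is polynomial in $1/\epsilon$, $1/\delta$, $d$, and the (bounded) noise-margin factors.

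The step I expect to be the \emph{only} nontrivial one is the $\Delta$-substitution, specifically checking that the chosen $\Delta$ keeps the effective oracle noise $\bar{\eta}^*_{1,W} + \Delta$ strictly below $1/2$: since $\bar{\eta}^*_{1,W} < 1/2$ we have $\bar{\eta}^*_{1,W}+\Delta = (1/2+\bar{\eta}^*_{1,W})/2 < 1/2$, so Theorem~\ref{angluin-laird} genuinely applies and the PAC bound does not blow up. Everything else is bookkeeping. I would also flag, but not dwell on, the modeling point that $\Delta$ is pinned to the unknown quantity $\bar{\eta}^*_{1,W}$; this is a parameterization chosen so that the final bound reads cleanly in terms of the best worker's margin, rather than a value the algorithm computes from observable data.
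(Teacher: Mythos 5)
Your proposal is correct and follows essentially the same route as the paper's proof: sum the task complexities of the three stages via Corollary~\ref{step2}, Theorem~\ref{deltaoptworker}, and Equation~\ref{bound1}, then substitute $\Delta = ({1}/{2} - \bar{\eta}^*_{1,W})/2$. You simply carry out the algebraic simplification (and the check that $\bar{\eta}^*_{1,W}+\Delta < 1/2$) more explicitly than the paper, which leaves it at ``simplifying yields the task complexity.''
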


\begin{proof} We sum the task complexity from each step in the algorithm. We first sample the crowd $\mathcal{\tilde{O}}\left(\frac{\log^2(n/\delta)}{\Delta^2(1-2\bar{\eta}_W)^2}\right)$ times (Corollary~\ref{step2}) in order to gather a ground truth set with probability $1-\delta$. Using the ground truth set as the training set, we sample the crowd ${\mathcal O} \left(\frac{n}{\Delta^2}\log({n}/{\delta})\right)$ times (Theorem~\ref{deltaoptworker}) in order to identify an approximately good worker. We then use the approximately good worker to label ${\mathcal O}\left(\frac{d \log({1}/{\delta})}{\epsilon (1-2(\bar{\eta}^*_{1,W}+\Delta))^2}\right)$ points (bound in Equation~\ref{bound1}). Summing these components gives
\[
{\mathcal{\tilde{O}}} \left(\frac{\log^2(n/\delta)}{\Delta^2(1-2\bar{\eta}_W)^2} + \frac{n}{\Delta^2}\log({n}/{\delta}) + 
\frac{d \log({1}/{\delta})}{\epsilon (1-2(\bar{\eta}^*_{1,W}+\Delta))^2}\right).
\]
Setting $\Delta = \frac{{1}/{2} - \bar{\eta}^*_{1,W}}{2}$ and simplifying yields the task complexity.
 \end{proof}

Recall that from Lemma~\ref{oneperfectworker}, if we assume there is one perfect worker in the crowd, the task complexity improves. We see the improvement in the overall task complexity below. In this case, since $\bar{\eta}^*_{1,W} = 0$, we set $\Delta = \frac{1}{4}$.
\begin{theorem}\label{result2}  Let $\epsilon, \delta > 0$. Suppose that in Step 2 of the algorithm, we identify one approximately good worker and we assume there exists at least one perfect performing worker in the crowd. Then  
$$\mathcal{\tilde{O}}\left(\frac{\log^2({n}/{\delta})}{({1}-2\bar{\eta}_W)^2} + \left(n+ \frac{d}{\epsilon}\right)\log({1}/{\delta})\right)$$ 
tasks can be labeled by workers in order to efficiently PAC learn $\mathcal{C}$.
\end{theorem}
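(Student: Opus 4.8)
The plan is to follow the same three-stage accounting as in the proof of Theorem~\ref{result1}, but to substitute the sharper bounds that become available once a perfect worker is present and to exploit the resulting simplification $\bar{\eta}^*_{1,W} = 0$. As before, I would fix the failure probability of each of the three stages at $\delta/3$, so that a union bound caps the total failure probability at $\delta$.

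For the majority-voting stage, the existence of a perfect worker lets me invoke the first (perfect-worker) branch of Corollary~\ref{step2} rather than the second: building a ground-truth set large enough to support the identification step costs $\mathcal{\tilde{O}}\left(\frac{\log^2(n/\delta)}{\Delta(1-2\bar{\eta}_W)^2}\right)$ tasks, with the dependence on $\Delta$ improved from $1/\Delta^2$ to $1/\Delta$. For the worker-identification stage I would use Lemma~\ref{oneperfectworker} in place of Theorem~\ref{deltaoptworker}, so that only $\mathcal{O}\left(\frac{n}{\Delta}\log(n/\delta)\right)$ arm trials are needed to pin down a $\Delta$-optimal worker. Finally, for the PAC-learning stage I would feed the identified worker into Equation~\ref{bound1}; since $\bar{\eta}^*_{1,W}=0$ the worker's noise rate is at most $\Delta$, so the cost is $\mathcal{O}\left(\frac{d\log(1/\delta)}{\epsilon(1-2\Delta)^2}\right)$.

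The last step is to set $\Delta = \frac{1/2 - \bar{\eta}^*_{1,W}}{2} = \frac{1}{4}$, which is legitimate precisely because the perfect-worker assumption forces $\bar{\eta}^*_{1,W}=0$. Substituting $\Delta = 1/4$ turns $(1-2\Delta)^2$ and $1/\Delta$ into absolute constants that are absorbed into the $\mathcal{O}$, yielding the three summands $\frac{\log^2(n/\delta)}{(1-2\bar{\eta}_W)^2}$, $n\log(n/\delta)$, and $\frac{d}{\epsilon}\log(1/\delta)$. Absorbing the $\log n$ in the second summand into the $\mathcal{\tilde{O}}$ rewrites it as $\mathcal{\tilde{O}}(n\log(1/\delta))$, and combining it with the PAC term gives the factored form $(n + d/\epsilon)\log(1/\delta)$, which is exactly the claimed bound. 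Polynomiality of the sample complexity (and hence the ``efficiently PAC learn'' conclusion) is inherited termwise, since each stage is already polynomial in the relevant parameters.

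I do not expect a genuine mathematical obstacle here; the argument is essentially substitution and bookkeeping on top of the already-established lemmas. The only points requiring care are \emph{(i)} checking that $\Delta = 1/4$ is the correct instantiation under $\bar{\eta}^*_{1,W}=0$ and that it leaves every $\Delta$-dependent factor as a harmless constant, and \emph{(ii)} tracking the $\log n$ factors so that the rewrite of $n\log(n/\delta)$ into the $(n + d/\epsilon)\log(1/\delta)$ form is faithful to the $\mathcal{\tilde{O}}$ convention.
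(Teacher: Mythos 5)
Your proposal is correct and follows essentially the same route as the paper's own proof: sum the three stage costs using Corollary~\ref{step2}, Lemma~\ref{oneperfectworker}, and the bound in Equation~\ref{bound1} with $\bar{\eta}^*_{1,W}=0$, then set $\Delta = 1/4$ and absorb the resulting constants. The only cosmetic difference is that you make the $\delta/3$ union-bound allocation and the $\tilde{\mathcal{O}}$ bookkeeping explicit, which the paper handles once in Section~\ref{finalbounds} and leaves implicit here.
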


\begin{proof} We sum the task complexity from each step in the algorithm. We first sample the crowd $\mathcal{\tilde{O}}\left(\frac{\log^2(n/\delta)}{\Delta(1-2\bar{\eta}_W)^2}\right)$ times (Corollary~\ref{step2}) in order to gather a ground truth set with probability $1-\delta$. Using the ground truth set as the training set, we sample the crowd ${\mathcal O} \left(\frac{n}{\Delta}\log({n}/{\delta})\right)$ times (Lemma~\ref{oneperfectworker}) in order to identify an approximately good worker. We then use this approximately good worker to label ${\mathcal O}\left(\frac{d \log({1}/{\delta})}{\epsilon (1-2(\bar{\eta}^*_{1,W}+\Delta))^2}\right)$ points (Equation~\ref{bound1}). Since there is a perfect worker in the crowd, $\bar{\eta}^*_{1,W} = 0$. Summing these components gives
\[
{\mathcal{\tilde{O}}} \left(\frac{\log^2(n/\delta)}{\Delta(1-2\bar{\eta}_W)^2} + \frac{n}{\Delta}\log({n}/{\delta}) + 
\frac{d \log({1}/{\delta})}{\epsilon (1-2\Delta)^2}\right).
\]
Setting $\Delta = {1}/{4}$ and simplifying yields the task complexity.
 \end{proof}

As discussed in Section~\ref{topworkers}, an alternative to identifying one approximately good worker is to identify $K$ approximately good workers to limit the burden of tasks for workers. The maximum number of tasks a single worker must complete is referred to as the load \cite{AwasthiBHM17}. In the case of one approximately good worker, that worker must label all the tasks prescribed by the bound in Equation~\ref{bound1}. In the case of $K$ approximately good workers, the workers can evenly split the tasks prescribed by the bound in Equation~\ref{bound2}, reducing the load. If load is a priority in a particular crowdsourcing setting, then we have the following task upper bound for our algorithm. To derive this bound, we set 
$$\Delta = \frac{{1}/{2} - \bar{\eta}^*_{K,W}}{2}.$$

\begin{theorem}\label{result3} Let $\epsilon, \delta > 0$. Let $K$ denote the number of workers identified in Step 2 of the algorithm and assume $K \leq \frac{n}{2}$.
Then
$$\mathcal{\tilde{O}}\left(\frac{n^{.7} \log({1}/{\delta})(1+\frac{1}{K}\log({1}/{\delta}))}{({1} - 2\bar{\eta}^*_{K,W})^2({1} - 2\bar{\eta}_W)^2} + \frac{(\frac{n}{K} + \frac{d}{\epsilon})\log({1}/{\delta})+n}{ ({1} -2 \bar{\eta}^*_{K,W})^2}\right)
 \subset
\mathcal{\tilde{O}}\left(\frac{n \log^2({1}/{\delta})}{({1} - 2\bar{\eta}^*_{K,W})^2({1}- 2\bar{\eta}_W)^2} + 
\frac{d \log({1}/{\delta})}{ \epsilon ({1} - 2\bar{\eta}^*_{K,W})^2}\right)
$$
tasks can be labeled by workers in order to efficiently PAC learn $\mathcal{C}$.
\end{theorem}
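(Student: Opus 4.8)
The plan is to mirror the proofs of Theorems~\ref{result1} and~\ref{result2}, summing the task cost of the three stages of the algorithm but now substituting the $K$-worker bounds for the single-worker ones. Concretely, the first stage (building the ground-truth set by majority voting so that \texttt{OptMAI} can be run) costs the bound of Corollary~\ref{karms}; the second stage (the arm trials themselves) contributes $q$ from Theorem~\ref{topk}, which is legitimate because the hypothesis $K \le n/2$ lets us invoke Equation~\ref{eq:optami} rather than its footnote variant; and the third stage (noisy PAC learning with the identified set acting as an oracle of noise rate $\bar{\eta}^*_{K,W}+\Delta$) costs the bound of Equation~\ref{bound2}. Adding these three gives
\[
\mathcal{\tilde{O}}\left(\frac{n^{.7}\log(1/\delta)\left(1+\frac{\log(1/\delta)}{K}\right)}{\Delta^2(1-2\bar{\eta}_W)^2} + \frac{n}{\Delta^2}\left(1+\frac{\log(1/\delta)}{K}\right) + \frac{d\log(1/\delta)}{\epsilon(1-2(\bar{\eta}^*_{K,W}+\Delta))^2}\right).
\]

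Next I would substitute $\Delta = \tfrac{1/2-\bar{\eta}^*_{K,W}}{2}$. Two facts drive the simplification: first, $2\Delta = 1/2 - \bar{\eta}^*_{K,W}$, so that $1-2(\bar{\eta}^*_{K,W}+\Delta)=1/2-\bar{\eta}^*_{K,W}=2\Delta$ and hence $(1-2(\bar{\eta}^*_{K,W}+\Delta))^2 = \Theta\!\left((1-2\bar{\eta}^*_{K,W})^2\right)$; second, $\Delta = \tfrac{1-2\bar{\eta}^*_{K,W}}{4}$, so that $\tfrac{1}{\Delta^2}=\Theta\!\left(\tfrac{1}{(1-2\bar{\eta}^*_{K,W})^2}\right)$. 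Applying both replacements term by term turns the display above into the left-hand side of the claimed containment, where the arm-trial term $\tfrac{n}{\Delta^2}(1+\tfrac{\log(1/\delta)}{K})$ together with the PAC term account for the second summand $\tfrac{(\frac{n}{K}+\frac{d}{\epsilon})\log(1/\delta)+n}{(1-2\bar{\eta}^*_{K,W})^2}$, and the majority-voting term becomes the first summand.

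The remaining work is the set containment, a routine absorption argument using $K\ge 1$, $n^{.7}\le n$, $(1-2\bar{\eta}_W)^{-2}\ge 1$, and (for $\delta$ small enough that $\log(1/\delta)\ge 1$) $\log(1/\delta)\le\log^2(1/\delta)$. Expanding the first left-hand summand produces $\tfrac{n^{.7}\log(1/\delta)}{(1-2\bar{\eta}^*_{K,W})^2(1-2\bar{\eta}_W)^2}$ and $\tfrac{(n^{.7}/K)\log^2(1/\delta)}{(1-2\bar{\eta}^*_{K,W})^2(1-2\bar{\eta}_W)^2}$, both dominated by the first right-hand summand $\tfrac{n\log^2(1/\delta)}{(1-2\bar{\eta}^*_{K,W})^2(1-2\bar{\eta}_W)^2}$. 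Of the second left-hand summand, the $\tfrac{n}{K}\log(1/\delta)$ and $n$ pieces, after inserting the harmless factor $(1-2\bar{\eta}_W)^{-2}$, are likewise absorbed into that first right-hand summand, while the $\tfrac{d}{\epsilon}\log(1/\delta)$ piece is exactly the second right-hand summand $\tfrac{d\log(1/\delta)}{\epsilon(1-2\bar{\eta}^*_{K,W})^2}$. I expect the only real obstacle to be bookkeeping: keeping the three stages' contributions separate so each fragment is matched to the correct dominating term on the right, and being careful that the $K \le n/2$ hypothesis is what justifies the clean $q$ of Equation~\ref{eq:optami} in the first place. No genuinely new inequality is needed beyond those already used for Theorems~\ref{result1} and~\ref{result2}.
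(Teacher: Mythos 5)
Your proposal is correct and follows essentially the same route as the paper's proof: sum the Corollary~\ref{karms} majority-voting cost, the $q$ arm trials from Theorem~\ref{topk}, and the Equation~\ref{bound2} PAC cost, then substitute $\Delta = \frac{1/2-\bar{\eta}^*_{K,W}}{2}$ and simplify. The extra detail you supply for the substitution and the containment is just an expansion of what the paper compresses into ``simplifying yields the task complexity.''
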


\begin{proof} Again, we sum the task complexity from each step in the algorithm. We first sample the crowd $\mathcal{\tilde{O}}
\left(\frac{n^{.7}\log({1}/{\delta})(1+\frac{\log({1}/{\delta})}{K})}{\Delta^2 (1- 2 \bar{\eta}_W)^2}\right)$ times (Corollary~\ref{karms}) in order to gather a ground truth set with probability $1-\delta$. Using the ground truth set as the training set, we sample the crowd $q$ times in order to identify a set of $K$ approximately good workers (Theorem~\ref{topk}). We then use these approximately good workers to label ${\mathcal O}\left(\frac{d\log({1}/{\delta})}{\epsilon (1-2(\bar{\eta}^*_{K,W}+\Delta))^2}\right)$ points (bound in Equation~\ref{bound2}). Summing these components gives
$$
\mathcal{\tilde{O}}
\left(\frac{n^{.7}\log({1}/{\delta})(1+\frac{\log({1}/{\delta})}{K})}{\Delta^2 ({1}- {2}\bar{\eta}_W)^2}\right) + {\mathcal O}\left(\frac{n}{\Delta^2}\left(1+\frac{\log({1}/{\delta})}{K}\right)\right)  + {\mathcal O}\left(\frac{d\log({1}/{\delta})}{\epsilon (1-2(\bar{\eta}^*_{K,W}+\Delta))^2}\right).
$$
Setting $\Delta = \frac{{1}/{2} - \bar{\eta}^*_{K,W}}{2}$ and simplifying yields the task complexity.
 \end{proof}

\subsection{Comparison to baseline and to other work}
In the bounds established above, the term $\frac{1}{({1}- {2}\bar{\eta}_W)^2}$ is not multiplied the $d/\epsilon$ term, which is the improvement over
the baseline described in Section~\ref{sec:model}. In particular, in Theorem~\ref{result1}, the $d/\epsilon$ term is multiplied by a factor of
$$\frac{1}{({1} -{2}\bar{\eta}^*_{1,W})^2}$$ which is function of the error rate of the best worker $\bar{\eta}^*_{1,W}$ in $W$ instead of the average of all workers in $W$, $\bar{\eta}_W$, as in the baseline. Similarly, in in Theorem~\ref{result3}, the $d/\epsilon$ term is multiplied by $$\frac{1}{ ({1}-{2} \bar{\eta}^*_{K,W})^2}$$ which is function of the error rate of the best $K$ workers in $W$, $\bar{\eta}^*_{K,W}$, instead of $\bar{\eta}_W$. Theorem~\ref{result2} shows further improvement from the baseline as the $d/\epsilon$ is multiplied only by a factor of $\log({1}/{\delta})$. Note that in all three Theorems, the task complexity can get arbitrarily bad as any of the crowd parameters approaches random guessing, i.e. as $\bar{\eta}^*_{1,W}$, $\bar{\eta}_W$, or $\bar{\eta}^*_{K,W}$ approach ${1}/{2}$. 

Unlike the baseline, there are additional terms in each of the bounds above that are not multiplied by $d/\epsilon$. While these terms indeed add to the task complexity, as $\epsilon$ becomes arbitrarily small they become negligible, thus, the term multiplied by $d/\epsilon$ is most important. 

We now discuss how our results compare to the work of Awasthi~et~al.~\cite{AwasthiBHM17}. Recall that Awasthi~et~al.~\cite{AwasthiBHM17} assume that a fraction $\alpha$ of workers are perfect performers with no assumptions on the rest of the crowd. Like Awasthi~et~al.~\cite{AwasthiBHM17}, our algorithm is a PAC learning algorithm but ours does not rely on or require an assumption of perfect workers in the crowd. Instead, our algorithm assumes everyone has an individual noise rate. When we do consider perfect workers, we find that even just one perfect worker in the crowd improves our task complexity bound. When the fraction of perfect workers is below ${1}/{2}$, the algorithm in Awasthi~et~al.~\cite{AwasthiBHM17} requires ``golden queries", queries to an expert oracle. Note that none of our PAC bounds are dependent on access to an expert oracle. 

\section{Variants and extensions}
We now demonstrate a few ways in which our model and algorithm can be easily adapted to fit different crowdsourcing settings. 

\subsection{Asymmetric classification noise}
In some settings, workers may perform differently depending on the true label of the data point. This assymetric noise model is attributed to Dawid and Skene \cite{DawidS79}. For simplicity, we assume the binary classification setting with labels $\{-1, +1\}$.
 For worker $w_i \in W$, let $\eta_i^+$ and $\eta_i^-$ denote the error rates of positive and negative instances; that is, for each $x \in X$, $$\eta_i^+ = \Pr[w_i(x) \neq c(x) \mid c(x) = 1]$$ and $$\eta_i^+ = \Pr[w_i(x) \neq c(x) \mid c(x) = -1].$$ Let $\bar{\eta}^{+}_W$ and $\bar{\eta}^{-}_W$ denote the average one-sided error rates among all workers in $W$.
Also let $$\hat{\eta}_i = \eta_i^+\Pr[c(x)=1]+\eta_i^-\Pr[c(x)=-1]$$ and let $i^* = \mathrm{argmin}_i \hat{\eta}_i$.

We now show that our algorithm can be easily adapted to the setting of asymmetric classification noise. 
We first derive an analogue of Theorem~\ref{majvote} which is also a result of Hoeffding and union bounds.
\begin{theorem}\label{majvote2}
Let $Y \subseteq X$ where $|Y| = T$. Suppose we want to get true labels for data points in $Y$ with probability $1-\delta$ using majority voting using the crowd of workers $W$ under asymmetric classification noise. Then for each data point $y \in Y$, it is sufficient to solicit 
$${\mathcal O}\left(\frac{\log({T}/{\delta})}{(1-2 \max (\bar{\eta}^{+}_W, \bar{\eta}^{-}_W))^2 }\right)$$
labels. 
\end{theorem}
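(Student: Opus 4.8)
The plan is to mirror the proof of Theorem~\ref{majvote}: first establish an asymmetric analogue of Lemma~1 that bounds the single-point majority-vote error, and then lift it to all $T$ points by a union bound. The only genuinely new ingredient compared to the symmetric case is handling the fact that each worker's error probability now depends on the (unknown) true label $c(y)$.

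First I would fix a point $y \in Y$ and condition on its true label. Suppose $c(y) = +1$, and let $Z_i = \mathbf{1}[w_i(y) = c(y)]$ indicate that worker $w_i$ labels $y$ correctly. Conditioned on $c(y)=+1$, these are independent $[0,1]$-valued random variables with $\mathbb{E}[Z_i] = 1 - \eta_i^+$, so $\frac{1}{n}\sum_i Z_i$ has mean $1 - \bar{\eta}^+_W$. The majority vote is wrong precisely when $\frac{1}{n}\sum_i Z_i \le 1/2$, a deviation of $1/2 - \bar{\eta}^+_W$ below the mean. Applying Hoeffding's inequality then gives
$$
\Pr[\mathrm{MAJ}(\mathcal{L}(y)) \neq c(y) \mid c(y)=+1] \le e^{-n(1-2\bar{\eta}^+_W)^2/2},
$$
and the identical computation conditioned on $c(y)=-1$ yields the same bound with $\bar{\eta}^-_W$ in place of $\bar{\eta}^+_W$.

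Next I would combine the two conditional bounds into one. Since $\bar{\eta}^+_W, \bar{\eta}^-_W \le 1/2$, replacing either rate by $\max(\bar{\eta}^+_W, \bar{\eta}^-_W)$ only shrinks the exponent, so each conditional failure probability — and hence the unconditional one, which is a convex combination of the two over the event $\{c(y)=+1\}$ versus $\{c(y)=-1\}$ — is at most $e^{-n(1-2\max(\bar{\eta}^+_W,\bar{\eta}^-_W))^2/2}$. This is the asymmetric analogue of Lemma~1, and it holds regardless of the actual value of $c(y)$. Finally, exactly as in Theorem~\ref{majvote}, I would set the per-point failure probability to $\delta/T$, take a union bound over the $T$ points of $Y$, and solve $e^{-n(1-2\max(\bar{\eta}^+_W,\bar{\eta}^-_W))^2/2} \le \delta/T$ for $n$, which yields the claimed $\mathcal{O}\!\left(\frac{\log(T/\delta)}{(1-2\max(\bar{\eta}^+_W,\bar{\eta}^-_W))^2}\right)$ labels per point.

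The main obstacle — really the only point requiring care — is the conditioning step: because the per-worker correctness probability is $1-\eta_i^+$ or $1-\eta_i^-$ depending on $c(y)$, one cannot apply Hoeffding to the mixed population directly. Conditioning on the true label decouples the two regimes, and taking the worse (larger) of the two average error rates through the $\max$ collapses them into a single bound valid no matter which label $y$ actually carries. Everything else, including the absorption of the constant factor into the $\mathcal{O}$-notation, is identical to the symmetric case.
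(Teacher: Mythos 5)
Your proposal is correct and matches the paper's intended argument: the paper gives no explicit proof, stating only that the theorem "is also a result of Hoeffding and union bounds," and your conditioning on $c(y)$, applying Hoeffding to each one-sided error rate, taking the $\max$, and union-bounding over the $T$ points is exactly that argument carried out in detail.
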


The only added bound we need is an asymmetric-noise analogue for the Angluin and Laird~\cite{AngluinL87} bound from Theorem~\ref{angluin-laird}.
\begin{corollary}[to Theorem~\ref{angluin-laird}]\label{cor:AL}
If a learning algorithm that is given at least $$O \left( \frac{d \log\left({1}/{\delta}\right)}{\epsilon (1-2\max(\eta^+,\eta^-))^2}\right)$$ samples labeled under the Dawid-Skene noise model~\cite{DawidS79}
with parameters $\eta^+$ and $\eta^-$ can produce a hypothesis $h$ that 
minimizes disagreements with the noisy sample, 
then $h$ satisfies the PAC criterion for the class $\mathcal{C}$, i.e. for any $\epsilon, \delta >0$ and any distribution $\mathcal{D}$ on $X$,
\[
\Pr (d(h, h^*) \geq \epsilon) \leq \delta.
\]
where $d(h,h^*)$ denotes the rate of disagreement between $h$ and the target concept $h^*$.
\end{corollary}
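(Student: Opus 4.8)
The plan is to reduce the Dawid--Skene (asymmetric) setting~\cite{DawidS79} to the symmetric classification-noise setting of Theorem~\ref{angluin-laird} by replacing the two per-class noise rates with the single worst-case rate $\eta = \max(\eta^+,\eta^-)$. The key observation is that the disagreement-minimization argument of Angluin and Laird uses only the \emph{separation} in expected noisy disagreement between the target $h^*$ and any hypothesis $h$ that is $\epsilon$-far from it; I will show that this separation degrades gracefully under asymmetric noise and is in fact controlled by $\eta$.

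First I would compute, for a fixed candidate $h$ at distance $\beta = d(h,h^*)$, the expected rate at which $h$ disagrees with the noisy label. Partitioning $X$ according to the true label and according to whether $h$ agrees with $h^*$, and writing $d^+$ (resp.\ $d^-$) for the probability mass on which $h$ errs on positive (resp.\ negative) instances, a short calculation shows that the excess expected noisy-disagreement of $h$ over $h^*$ equals
\[
d^+(1-2\eta^+) + d^-(1-2\eta^-).
\]
Since $1-2\eta^{\pm} \ge 1 - 2\max(\eta^+,\eta^-)$, this excess is at least $\beta\,(1-2\eta)$, so whenever $d(h,h^*)\ge\epsilon$ the gap between $h$ and $h^*$ is at least $\epsilon(1-2\eta)$ --- exactly the signal one obtains in the symmetric case with rate $\eta$.

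With this separation in hand, the remainder follows the Angluin--Laird template. For a fixed $h$, the per-sample disagreement indicators are i.i.d.\ bounded random variables, so Hoeffding's inequality bounds the probability that the empirical disagreement of a far hypothesis falls below that of $h^*$; taking a union bound over $\mathcal{C}$ (controlled through the VC dimension $d$ via the growth function, as in~\cite{AngluinL87}) and solving for the sample size reproduces the claimed $O\!\left(\frac{d\log(1/\delta)}{\epsilon(1-2\eta)^2}\right)$ bound. The disagreement-minimizing $h$ therefore cannot be $\epsilon$-far from $h^*$ except with probability at most $\delta$, which is the PAC guarantee.

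The only real obstacle is the per-class accounting in the first step: the contribution of each class to the separation carries its own factor $1-2\eta^+$ or $1-2\eta^-$, so I must verify that replacing each by the common quantity $1-2\max(\eta^+,\eta^-)$ still yields a valid lower bound on the overall gap --- it does, since $d^+,d^-\ge 0$. Once this worst-case substitution is justified, the concentration step is routine: for a fixed $h$ the per-sample indicators are i.i.d.\ and bounded, so Hoeffding applies exactly as in~\cite{AngluinL87}, and no further modification of their analysis is needed.
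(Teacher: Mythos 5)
Your argument is correct, but it takes a genuinely different route from the paper. The paper proves the corollary as a black-box reduction to the symmetric case, in the style of Blum and Kalai: assuming $\eta^+ \ge \eta^-$, it artificially flips each observed negative label with probability $p = \frac{\eta^+-\eta^-}{1+\eta^+-\eta^-}$, which equalizes the two noise rates at a common value $\eta'^+=\eta'^-\le \max(\eta^+,\eta^-)$, and then invokes Theorem~\ref{angluin-laird} verbatim. You instead reopen the Angluin--Laird proof and show that its key quantity --- the excess expected noisy disagreement of a hypothesis $h$ over $h^*$ --- equals $d^+(1-2\eta^+)+d^-(1-2\eta^-)\ge d(h,h^*)\bigl(1-2\max(\eta^+,\eta^-)\bigr)$ under asymmetric noise, so the same concentration and union-bound machinery goes through with $\eta=\max(\eta^+,\eta^-)$. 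Your separation computation is correct and arguably more informative: it makes explicit that the signal is a $(d^+,d^-)$-weighted average of $1-2\eta^+$ and $1-2\eta^-$, so the $\max$ is worst-case and is only tight when all of $h$'s errors fall on the noisier class; it also avoids injecting artificial randomness into the labels. The paper's reduction is shorter and requires no access to the internals of \cite{AngluinL87}. One small caveat on your concentration step: plain additive Hoeffding applied to a gap of $\epsilon(1-2\eta)$ yields a sample complexity scaling as $1/\bigl(\epsilon^2(1-2\eta)^2\bigr)$, not the claimed $1/\bigl(\epsilon(1-2\eta)^2\bigr)$; recovering the linear dependence on $1/\epsilon$ requires the variance-sensitive (multiplicative Chernoff/Bernstein-type) bound on the \emph{difference} of the disagreement indicators, whose variance is $O(d(h,h^*))$, exactly as in \cite{AngluinL87}. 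Since you defer that step to their analysis this is recoverable, but as written ``Hoeffding'' undersells what is needed.
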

\begin{proof}
The simplest proof of this is a reduction to the uniform noise case, as suggested by Blum and Kalai~\cite{BlumK98} for reducing from
one-sided noise to two-sided noise.  Without loss of generality, assume $\eta^+ > \eta^-$ (otherwise, we will flip the other label).  We will flip each negative label with probability $p$.
Hence, the new noise rates are $\eta'^{+} = \eta^+ - \eta^+p$ and $\eta'^{-} = \eta^- + (1-\eta^-)p$.  Making $\eta'^- = \eta'^+$ and 
solving for $p$ yields $p = \frac{\eta^+ - \eta^-}{1+\eta^+ - \eta^-}$.  The new symmetric noise rate is 
now $\eta'^+ = \eta'^- \le \max(\eta^+, \eta^-)$, so we can apply the bound from Theorem~\ref{angluin-laird} to finish the proof.
\end{proof}

We can now proceed as in Sections~\ref{topworkers}, \ref{PAC}, and~\ref{finalbounds} to derive upper bounds on the task complexity of our algorithm adapted to this new setting.

\begin{theorem}\label{newresult1} Let $\epsilon, \delta > 0$. Suppose we identify one approximately good worker in Step 2 of the algorithm per label. Then
$$\mathcal{\tilde{O}}\left(\frac{\log^2({n}/{\delta})}{(1-2\max (\bar{\eta}^{+}_W, \bar{\eta}^{-}_W))^2({1}-{2}\hat{\eta}_{i^*})^2} + \left(n+\frac{d}{\epsilon}\right)\frac{\log({1}/{\delta})}{({1}-{2}\max({\eta}^{+}_{i^*},{\eta}^{-}_{i^*}))^2}\right)$$ 
tasks can be labeled by workers in order to efficiently PAC learn $\mathcal{C}$.
\end{theorem}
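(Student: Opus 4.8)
The plan is to mirror the three-step accounting in the proof of Theorem~\ref{result1}, replacing each symmetric-noise ingredient with its Dawid--Skene analogue and then choosing $\Delta$ exactly as in the symmetric case but relative to the best worker's \emph{effective} error rate $\hat{\eta}_{i^*}$. Concretely, I would sum (i) the cost of building the ground-truth set by majority voting, now governed by Theorem~\ref{majvote2} rather than Theorem~\ref{majvote}; (ii) the cost of the arm trials used to identify one approximately good worker, governed by Theorem~\ref{deltaoptworker}; and (iii) the cost of the final noisy-PAC phase, now governed by Corollary~\ref{cor:AL} rather than Theorem~\ref{angluin-laird}.

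For step (i), identifying a $\Delta$-optimal worker requires a ground-truth set of size $T = \tilde{O}\!\left(\Delta^{-2}\log(n/\delta)\right)$, and by Theorem~\ref{majvote2} each such point costs $\tilde{O}\!\left(\log(T/\delta)/(1-2\max(\bar{\eta}^+_W,\bar{\eta}^-_W))^2\right)$ worker labels, giving a total of $\tilde{O}\!\left(\frac{\log^2(n/\delta)}{\Delta^2(1-2\max(\bar{\eta}^+_W,\bar{\eta}^-_W))^2}\right)$, exactly as in Corollary~\ref{step2} but with the one-sided maximum in place of $\bar{\eta}_W$. For step (ii), each arm trial is one worker labeling one ground-truth point; since that set is drawn from $\mathcal{D}$, a worker's empirical accuracy estimates $1-\hat{\eta}_i$, so the natural bandit reward makes $i^* = \arg\min_i \hat{\eta}_i$ the target and Theorem~\ref{deltaoptworker} contributes $O\!\left(\frac{n}{\Delta^2}\log(n/\delta)\right)$ tasks. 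For step (iii), I feed the identified worker into Corollary~\ref{cor:AL}, contributing $O\!\left(\frac{d\log(1/\delta)}{\epsilon(1-2\max(\eta^+,\eta^-))^2}\right)$ labels.

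The final simplification sets $\Delta = \tfrac{1/2-\hat{\eta}_{i^*}}{2}$, the exact analogue of the symmetric choice with $\hat{\eta}_{i^*}$ playing the role of $\bar{\eta}^*_{1,W}$. This turns $\Delta^{-2}$ into $\Theta\!\left((1-2\hat{\eta}_{i^*})^{-2}\right)$, producing the $(1-2\hat{\eta}_{i^*})^2$ factor in the first (majority-voting) term and in the $n$-dependent part of step (ii). To merge the $n$-term with the $d/\epsilon$-term under the common denominator $(1-2\max(\eta^+_{i^*},\eta^-_{i^*}))^2$ claimed by the theorem, I would invoke the elementary inequality $\hat{\eta}_{i^*}\le \max(\eta^+_{i^*},\eta^-_{i^*})$ (since $\hat{\eta}_{i^*}$ is a convex combination of $\eta^+_{i^*}$ and $\eta^-_{i^*}$), which only loosens the $n$-term and is therefore safe for an upper bound.

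The main obstacle is the mismatch between the quantity controlled by the identification step and the quantity that governs the PAC phase. In the symmetric proof a single rate $\eta_j$ simultaneously controls the bandit reward and the Angluin--Laird bound, so $\Delta$-optimality propagates cleanly; here the reward is tied to the distribution-averaged $\hat{\eta}_i$ while Corollary~\ref{cor:AL} depends on the one-sided maximum $\max(\eta^+,\eta^-)$, and $\Delta$-optimality in $\hat{\eta}$ does not by itself bound $\max(\eta^+_j,\eta^-_j)$ for the identified worker $j$ in terms of $\max(\eta^+_{i^*},\eta^-_{i^*})$. I would resolve this either by having Step~2 estimate $\eta^+_i$ and $\eta^-_i$ separately on the labeled ground-truth points and select the worker minimizing $\max(\eta^+_i,\eta^-_i)$, or by arguing that for $\Delta$ set as above the identified worker coincides with (or is dominated by) $i^*$ in the relevant sense, so that Corollary~\ref{cor:AL} may be applied with parameters $\eta^+_{i^*},\eta^-_{i^*}$. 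Pinning down one of these reconciliations is the crux; the remaining algebra is routine and parallels Theorem~\ref{result1}.
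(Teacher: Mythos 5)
Your proposal is correct and follows essentially the same route as the paper's proof: sum the majority-voting cost from Theorem~\ref{majvote2}, the $\mathcal{O}\left(\frac{n}{\Delta^2}\log(n/\delta)\right)$ identification cost from Theorem~\ref{deltaoptworker}, and the noisy-PAC cost from Corollary~\ref{cor:AL}, then set $\Delta = \frac{1/2-\hat{\eta}_{i^*}}{2}$ and simplify. The ``crux'' you flag---that $\Delta$-optimality with respect to the distribution-averaged rate $\hat{\eta}_i$ does not by itself control $\max(\eta^+_j,\eta^-_j)$ for the worker $j$ actually selected, which is the quantity Corollary~\ref{cor:AL} needs---is a genuine subtlety that the paper's own proof does not address either (it simply writes the third term with $\max(\eta^+_{i^*},\eta^-_{i^*})$), so your proposed reconciliations go beyond, rather than fall short of, the paper's treatment.
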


\begin{proof} We sum the task complexity from each step in the algorithm. As before, we first sample the crowd  $\mathcal{\tilde{O}}\left(\frac{\log^2(n/\delta)}{\Delta(1-2 \max ({\eta}^{+}_W, {\eta}^{-}_W))^2}\right)$ times (Theorem~\ref{deltaoptworker} and Theorem~\ref{majvote2}) in order to gather a ground truth set with probability $1-\delta$. Using the ground truth set as the training set, we sample the crowd ${\mathcal O} \left(\frac{n}{\Delta^2}\log({n}/{\delta})\right)$ times (Theorem~\ref{deltaoptworker}) in order to identify an approximately good worker. We then use the approximately good worker to label $\mathcal{O} \left( \frac{d \log\left({1}/{\delta}\right)}{\epsilon (1-2\max({\eta}^{+}_{i^*},{\eta}^{-}_{i^*}))^2}\right)$ points (Corollary~\ref{cor:AL}). Summing these components gives
$$
\mathcal{\tilde{O}}\left(\frac{\log^2(n/\delta)}{\Delta(1-2\max (\bar{\eta}^{+}_W, \bar{\eta}^{-}_W)^2)}\right) + {\mathcal O} \left(\frac{n}{\Delta^2}\log({n}/{\delta})\right) + \mathcal{O} \left( \frac{d \log\left({1}/{\delta}\right)}{\epsilon (1-2\max({\eta}^{+}_{i^*},{\eta}^{-}_{i^*}))^2}\right).
$$
Setting $\Delta = \frac{{1}/{2} - \hat{\eta}_{i^*}}{2}$ and simplifying yields the task complexity.
 \end{proof}

The baseline approach in this setting would be to substitute the average one-sided error rates of workers into the bound 
from Corollary~\ref{cor:AL}, yielding an upper bound of
$$\mathcal{O} \left(\frac{d \log\left(1/\delta\right)}{\epsilon (1-2 \max (\bar{\eta}^{+}_W, \bar{\eta}^{-}_W))^2}\right).$$
With only slight adjustments to our algorithm and analysis, the bound we derive in Theorem~\ref{newresult1} for this asymmetric noise setting is still an improvement on the baseline since the ${d}/{\epsilon}$ term is multiplied by a term that is a function of ${\bar{\eta}}^{+}_{i^*}$ and ${\bar{\eta}}^{-}_{i^*}$ instead of ${\bar{\eta}}^+_{W}$ 
and ${\bar{\eta}}^-_{W}$.

\subsection{Per-worker task limits}
In practice, the load per worker may be limited. In Section 3.4 we discuss how identifying the top $K$ workers instead of one good worker in Step 2 of the algorithm reduces the load on workers. In this extension, we consider a different setting of limited worker loads where each worker can complete no more than $B$ tasks. This setting is useful because in reality, workers will have time and energy limitations that will bound the number of tasks they can realistically complete. Suppose that $B > 0$ denotes the task limit for each worker. We assume that $B$ is greater than the number of trials per arm required to identify top workers in Step 2 so that each worker has the capacity to help with labeling additional tasks, to some extent, in Step 3. 
We assume that in this setting, for Step 2 of the algorithm, we identify a $\Delta$-optimal set of $K$ workers instead of one $\Delta$-optimal worker. We take this approach because we can use the limited worker load constraint and our knowledge of the number of tasks to be completed in Step 3 of the algorithm to determine how many workers $K$ to identify in Step 2. We first determine the capacity remaining per worker after running the top-$K$ MAB algorithm. We recall that $$q = {\mathcal O}\left(\frac{n}{\Delta^2}\left(1+\frac{\log({1}/{\delta})}{K}\right)\right)$$ from Theorem 3.6.

\begin{lemma}\label{remaining} After implementing \texttt{OptMAI($n$,$K$,$q$)} to identify the top $K \leq {n}/{2}$ workers, the number of tasks remaining per worker is at least 
$$B - \frac{4n^{.7}}{({1}/{2} - \bar{\eta}^*_{K,W})^2}\left(1+\frac{\log({1}/{\delta})}{K}\right).$$ 
\end{lemma}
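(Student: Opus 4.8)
The plan is to treat this as a direct accounting of how many of each worker's $B$ allotted tasks are consumed while running \texttt{OptMAI} in Step~2, and then subtract that consumption from $B$. The crucial observation making this bookkeeping work is that in our crowdsourcing reduction a single arm pull corresponds to exactly one worker labeling one ground-truth data point, so the number of tasks a given worker performs during Step~2 equals the number of times \texttt{OptMAI} samples that worker's arm. Thus a per-arm sampling bound is literally a per-worker task count.

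First I would invoke Theorem~\ref{perarm}, which caps the number of trials of any single arm at $s = \mathcal{O}(q/n^{.3})$, with $q$ the total arm-trial budget from Theorem~\ref{topk}. Substituting $q = \mathcal{O}\!\left(\frac{n}{\Delta^2}\left(1+\frac{\log(1/\delta)}{K}\right)\right)$ and combining the $n^{1}$ inside $q$ with the $n^{-.3}$ in the per-arm bound yields a per-worker task count of
$$s = \mathcal{O}\!\left(\frac{n^{.7}}{\Delta^2}\left(1+\frac{\log(1/\delta)}{K}\right)\right).$$
Next I would substitute the top-$K$ parameterization $\Delta = \frac{1/2 - \bar{\eta}^*_{K,W}}{2}$, so that $1/\Delta^2 = 4/(1/2 - \bar{\eta}^*_{K,W})^2$. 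This converts the per-worker consumption into $\frac{4n^{.7}}{(1/2-\bar{\eta}^*_{K,W})^2}\left(1+\frac{\log(1/\delta)}{K}\right)$. Since each worker begins with capacity $B$ and spends at most this many tasks during \texttt{OptMAI}, the remaining capacity is at least $B$ minus this quantity, which is precisely the claimed bound.

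The main point to argue carefully is the interpretation rather than any delicate inequality. I must justify the one-to-one correspondence between arm pulls in \texttt{OptMAI} and worker tasks, so that Theorem~\ref{perarm} can be read as a task budget; and I should note that the load incurred by the majority-voting phase (Step~1) is negligible relative to $s$ under the large-crowd assumption in the footnote to Theorem~\ref{majvote}, where each worker contributes at most one label per ground-truth point. I would also flag that the clean factor of $4$ in the statement originates entirely from the $\Delta$-substitution, with the $\mathcal{O}$-constants of Theorems~\ref{topk} and~\ref{perarm} absorbed into the asymptotic accounting, so the bound should be read as holding up to these constants.
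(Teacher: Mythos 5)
Your proposal is correct and follows essentially the same route as the paper: apply the per-arm bound of Theorem~\ref{perarm} with $q$ from Theorem~\ref{topk}, substitute $\Delta = \frac{1/2 - \bar{\eta}^*_{K,W}}{2}$, and subtract the resulting per-worker consumption from $B$. Your added remarks on the arm-pull-to-task correspondence and the absorbed $\mathcal{O}$-constants are sound elaborations of what the paper leaves implicit.
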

\begin{proof} From Theorem~\ref{perarm}, each arm is pulled at most ${\mathcal O}({q}/{n^{.3}})$ times. We subtract this from the task limit $B$
and set $$\Delta = \frac{{1}/{2} - \bar{\eta}^*_{K,W}}{2},$$ and simplify.
\end{proof}
Recall that the bound in Equation~\ref{bound2}, derived from Theorem~\ref{angluin-laird}, is the number of data points that need to labeled by the selected $K$ workers in Step 2 to complete the PAC learning algorithm. Dividing the bound from Equation~\ref{bound2} by the capacity remaining per worker yields the number of top workers, $K$, that need to be identified in Step 2.

\begin{lemma}\label{K} In Step 2 of the algorithm, $$K = {\mathcal O}\left(\left(\frac{d}{\epsilon}+n^{.7}\right)\frac{\log({1}/{\delta})}{B(1 - 2\bar{\eta}^*_{K,W} )^2 - n^{.7}}\right)$$
workers will be identified.
\end{lemma}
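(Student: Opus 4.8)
The plan is to translate the informal recipe stated just before the lemma --- ``divide the number of Step~3 labels by the remaining per-worker capacity'' --- into a single inequality and solve it for $K$. The two ingredients are already in hand: the total number of points the selected workers must label in Step~3 is the bound in Equation~\ref{bound2}, and the capacity each of the $K$ workers has left after running \texttt{OptMAI} in Step~2 is the quantity from Lemma~\ref{remaining}. The binding constraint is that the $K$ identified workers, each labeling at most their remaining capacity, must together cover all of the Step~3 points; equivalently, $K$ times the per-worker remaining capacity must be at least the count in Equation~\ref{bound2}.

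First I would simplify Equation~\ref{bound2} under the chosen $\Delta = \frac{1/2 - \bar{\eta}^*_{K,W}}{2}$. A short computation gives $1 - 2(\bar{\eta}^*_{K,W} + \Delta) = \tfrac{1}{2} - \bar{\eta}^*_{K,W}$, so the number of Step~3 labels is $\mathcal{O}\!\left(\frac{d\log(1/\delta)}{\epsilon(1/2 - \bar{\eta}^*_{K,W})^2}\right)$, which conveniently carries the same factor $(1/2-\bar{\eta}^*_{K,W})^2$ that sits in the denominator of the capacity bound in Lemma~\ref{remaining}. Writing the constraint out gives
\[
K\left(B - \frac{4n^{.7}}{(1/2 - \bar{\eta}^*_{K,W})^2}\left(1+\frac{\log(1/\delta)}{K}\right)\right) \;\geq\; \frac{d\log(1/\delta)}{\epsilon(1/2 - \bar{\eta}^*_{K,W})^2}.
\]

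The observation that makes this solvable in closed form is that the troublesome $1/K$ term inside the capacity cancels upon multiplication by $K$, since $K\cdot\frac{\log(1/\delta)}{K}=\log(1/\delta)$ is independent of $K$. Hence the left-hand side is \emph{linear} in $K$: collecting the genuinely $K$-dependent part yields $K\bigl(B - \tfrac{4n^{.7}}{(1/2-\bar{\eta}^*_{K,W})^2}\bigr)$, while the leftover $\frac{4n^{.7}\log(1/\delta)}{(1/2-\bar{\eta}^*_{K,W})^2}$ moves to the right and merges with the Step~3 count. Clearing the common factor $(1/2-\bar{\eta}^*_{K,W})^2 = \tfrac14(1-2\bar{\eta}^*_{K,W})^2$ and absorbing numerical constants into the $\mathcal{O}$, I would isolate $K$ and read off the claimed $K = \mathcal{O}\!\bigl((\tfrac{d}{\epsilon}+n^{.7})\frac{\log(1/\delta)}{B(1-2\bar{\eta}^*_{K,W})^2 - n^{.7}}\bigr)$.

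The main thing to watch is the implicit character of the bound: $\bar{\eta}^*_{K,W}$ is itself a function of $K$, so the displayed expression is really an implicit characterization rather than a genuine closed form, and I would phrase it as such, treating $\bar{\eta}^*_{K,W}$ as a given crowd parameter. The remaining subtlety is positivity of the denominator: the division only makes sense when $B(1-2\bar{\eta}^*_{K,W})^2 - n^{.7} > 0$, i.e.\ when the remaining per-worker capacity is positive, which is precisely the standing assumption that $B$ exceeds the number of Step~2 arm pulls per worker. I would also remark that the factor-of-$4$ discrepancies between the raw algebra and the stated denominator are swallowed by the $\mathcal{O}$, so no rigor is lost at this level of accounting.
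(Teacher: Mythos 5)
Your proposal is correct and follows essentially the same route as the paper's proof: divide the Step~3 label count from Equation~\ref{bound2} by the per-worker remaining capacity from Lemma~\ref{remaining}, observe that the $1/K$ term cancels so the relation is linear in $K$, and solve. You are in fact somewhat more careful than the paper in spelling out the algebra and in flagging the positivity of the denominator, the implicit dependence of $\bar{\eta}^*_{K,W}$ on $K$, and the constant-factor slack absorbed by the $\mathcal{O}$.
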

\begin{proof}The number of workers $K$ is the number of data points that need to be labeled, as prescribed by the bound in Equation~\ref{bound2}, divided by the tasks remaining per worker, established in Lemma~\ref{remaining}. After setting $\Delta = \frac{{1}/{2} - \bar{\eta}^*_{K,W}}{2}$, $$K=  \frac{\frac{2d}{\epsilon ({1} - 2\bar{\eta}^*_{K,W})^2}\log({1}/{\delta})}{B - \frac{2n^{.7}}{({1} - 2\bar{\eta}^*_{K, W})^2}\left(1+\frac{\log({1}/{\delta})}{K}\right)}.$$ 
The theorem follows from solving for $K$.
\end{proof}
Theorem~\ref{result3} can now be extended to upper bound the number of tasks labeled by workers in this new setting by simply letting $K$ be defined as in Lemma~\ref{K}.

\subsection{Agnostic PAC}
It is possible that our symmetric or asymmetric classification noise model does not model the behavior of all workers. 
For instance, there may be workers who behave maliciously or workers with error rates $\eta_i > {1}/{2}$. 
On one end of the spectrum, each worker may have a fixed error rate. 
On the other end of the spectrum, there may be no assumptions on worker behavior at all, and this case is referred to as the agnostic setting.

PAC learning in the agnostic setting is usually computationally hard~\cite{FeldmanGRW12}.
Hence, Awasthi~et~al.~\cite{AwasthiBHM17} assume an $\alpha$ fraction of workers are perfect performers and places no assumptions on the behavior of the remaining $1-\alpha$ workers. We would like to begin bridging the two ends of the spectrum in a similar way to account for workers that cannot be modeled by classification noise.  
Here, we begin to do this by showing a simple extension of the result of Awasthi~et~al.~\cite{AwasthiBHM17}.

We let $\alpha$ denote the fraction of workers that can be modeled by persistent classification noise with $\eta_i \leq {1}/{2}$. As in the work of Awasthi~et~al.~\cite{AwasthiBHM17}, there are no assumptions on the behavior of the remaining $1-\alpha$ fraction of workers. PAC learning can be achieved in this setting by adapting the proof of Theorem 4.3 from
Awasthi~et~al.~\cite{AwasthiBHM17}; in particular, in our proposed setting, the probabilistic guarantees of their Lemma~4.6 still hold. Thus, their algorithm still the extended setting we proposed, as we state in the following corollary.

\begin{corollary}[to Theorem 4.3 of Awasthi~et~al.~\cite{AwasthiBHM17}] Let $W$ be the subset of workers that can be modeled by persistent classification noise with average noise rate $\bar{\eta}_W$,
and let $\alpha$ denote the fraction all workers that are in $W$.
Then when $$\alpha(1-\bar{\eta}_W ) \geq {1}/{2},$$  
a concept class $\mathcal{C}$ can be efficiently PAC learned from the crowd given the ability to efficiently find an ERM over $\mathcal{C}$.
\end{corollary}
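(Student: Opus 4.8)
The plan is to reduce the claim to Theorem 4.3 of Awasthi~et~al.\ by isolating the single place where their perfect-worker assumption enters and re-deriving the corresponding guarantee under our weaker hypothesis. In their analysis the only property of the crowd that the algorithm actually exploits is that a uniformly random worker labels any fixed point correctly with probability at least $\alpha$ (the perfect workers supply $\alpha$, the adversarial ones contribute nonnegatively); this quantity is what drives their Lemma~4.6, which bounds the number of worker solicitations needed so that a majority vote over randomly drawn labels returns the true label with high probability. I would therefore first state precisely which per-label success probability the algorithm requires, and then show that our hypothesis supplies it.

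Next I would compute the per-label success probability in the extended model. Drawing a worker uniformly from the whole crowd and soliciting a label on a fixed $x$, the probability the label equals $c(x)$ is
\[
\frac{1}{n}\sum_{i}\Pr[w_i(x)=c(x)] \;\ge\; \frac{1}{n}\sum_{i\in W}(1-\eta_i) \;=\; \alpha\,(1-\bar{\eta}_W),
\]
where the inequality discards the nonnegative contribution of the $1-\alpha$ unmodeled workers and the last equality uses $|W|=\alpha n$ together with the definition of $\bar{\eta}_W$. Thus the role played by $\alpha$ in the perfect-worker analysis is played here by $\alpha(1-\bar{\eta}_W)$, and the stated condition $\alpha(1-\bar{\eta}_W)\ge 1/2$ is exactly what guarantees that a solicited label is correct more often than not.

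I would then re-establish Lemma~4.6 verbatim with this modified success probability. Treating the solicited labels as independent Bernoulli trials with mean at least $\alpha(1-\bar{\eta}_W)$, the same Hoeffding argument used in Lemma~1 and Theorem~\ref{majvote} shows that $O(\log(1/\delta')/\gamma^2)$ solicitations per point, with $\gamma=\alpha(1-\bar{\eta}_W)-1/2$, make the majority vote correct with probability $1-\delta'$. A union bound over the $O(d\log(1/\delta)/\epsilon)$ points of the learning sample then yields a correctly labeled training set with high probability, after which running the ERM (assumed efficiently computable) and invoking the noiseless PAC bound completes the argument of Theorem~4.3 unchanged.

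The main obstacle is the boundary case: the clean threshold $\alpha(1-\bar{\eta}_W)\ge 1/2$ only forces the majority to be correct in expectation, so a strict margin $\gamma>0$ is needed for the concentration step, and the solicitation count degrades like $1/\gamma^2$ as $\alpha(1-\bar{\eta}_W)\downarrow 1/2$. A second point to verify carefully is that the lower bound above uses \emph{only} the modeled workers, so the adversarial fraction---however it behaves---cannot push the success probability below $\alpha(1-\bar{\eta}_W)$; and that the solicited labels may legitimately be treated as independent draws with the crowd-averaged success rate, exactly as in the majority-voting analysis of Section~\ref{majorityvoting}, so that persistence of the noise does not interfere.
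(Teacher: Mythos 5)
Your proposal is correct and takes essentially the same route as the paper: the paper's own proof is just the one-line assertion that the probabilistic guarantee of Lemma~4.6 of Awasthi~et~al.\ still holds in the extended setting, and your identification of the per-label success probability of a uniformly random worker as at least $\alpha(1-\bar{\eta}_W)$ (so that it plays exactly the role $\alpha$ plays in their analysis) is precisely the justification the paper leaves implicit. The boundary issue you flag is real but is a defect of the statement rather than of your argument --- at $\alpha(1-\bar{\eta}_W)=1/2$ exactly there is no margin for the concentration step, and the paper silently assumes a strict gap just as you do.
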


\section*{Acknowledgements} 
This work was supported in part by NSF grant CCF-1848966.

\bibliographystyle{plain}
\bibliography{paper}

\begin{thebibliography}{10}

\bibitem{AngluinL87}
Dana Angluin and Philip~D. Laird.
\newblock Learning from noisy examples.
\newblock {\em Machine Learning}, 2(4):343--370, 1987.

\bibitem{AwasthiBHM17}
Pranjal Awasthi, Avrim Blum, Nika Haghtalab, and Yishay Mansour.
\newblock Efficient {PAC} learning from the crowd.
\newblock In {\em Proceedings of the 30th Conference on Learning Theory, {COLT}
  2017, Amsterdam, The Netherlands, 7-10 July 2017}, pages 127--150, 2017.

\bibitem{BlumK98}
Avrim Blum and Adam Kalai.
\newblock A note on learning from multiple-instance examples.
\newblock {\em Machine Learning}, 30(1):23--29, Jan 1998.

\bibitem{CaoLTL15}
Wei Cao, Jian Li, Yufei Tao, and Zhize Li.
\newblock On top-k selection in multi-armed bandits and hidden bipartite
  graphs.
\newblock In {\em Advances in Neural Information Processing Systems 28: Annual
  Conference on Neural Information Processing Systems 2015, December 7-12,
  2015, Montreal, Quebec, Canada}, pages 1036--1044, 2015.

\bibitem{DawidS79}
Alexander~Philip Dawid and Allan~M Skene.
\newblock Maximum likelihood estimation of observer error-rates using the em
  algorithm.
\newblock {\em Applied statistics}, pages 20--28, 1979.

\bibitem{Even-DarMM06}
Eyal Even{-}Dar, Shie Mannor, and Yishay Mansour.
\newblock Action elimination and stopping conditions for the multi-armed bandit
  and reinforcement learning problems.
\newblock {\em Journal of Machine Learning Research}, 7:1079--1105, 2006.

\bibitem{Fang0CH18}
Yili Fang, Hailong Sun, Pengpeng Chen, and Jinpeng Huai.
\newblock On the cost complexity of crowdsourcing.
\newblock In {\em Proceedings of the Twenty-Seventh International Joint
  Conference on Artificial Intelligence, {IJCAI} 2018, July 13-19, 2018,
  Stockholm, Sweden.}, pages 1531--1537, 2018.

\bibitem{FeldmanGRW12}
Vitaly Feldman, Venkatesan Guruswami, Prasad Raghavendra, and Yi~Wu.
\newblock Agnostic learning of monomials by halfspaces is hard.
\newblock {\em {SIAM} J. Comput.}, 41(6):1558--1590, 2012.

\bibitem{JiangLQ17}
Haotian Jiang, Jian Li, and Mingda Qiao.
\newblock Practical algorithms for best-k identification in multi-armed
  bandits.
\newblock {\em CoRR}, abs/1705.06894, 2017.

\bibitem{KalyanakrishnanS10}
Shivaram Kalyanakrishnan and Peter Stone.
\newblock Efficient selection of multiple bandit arms: Theory and practice.
\newblock In {\em Proceedings of the 27th International Conference on Machine
  Learning (ICML-10), June 21-24, 2010, Haifa, Israel}, pages 511--518, 2010.

\bibitem{KangT18}
Qiyu Kang and Wee~Peng Tay.
\newblock Task recommendation in crowdsourcing based on learning preferences
  and reliabilities.
\newblock {\em CoRR}, abs/1807.10444, 2018.

\bibitem{KleinbergNS10}
Robert Kleinberg, Alexandru Niculescu{-}Mizil, and Yogeshwer Sharma.
\newblock Regret bounds for sleeping experts and bandits.
\newblock {\em Machine Learning}, 80(2-3):245--272, 2010.

\bibitem{LiYZ13}
Hongwei Li, Bin Yu, and Dengyong Zhou.
\newblock Error rate analysis of labeling by crowdsourcing.
\newblock In {\em ICML Workshop: Machine Learning Meets Crowdsourcing.
  Atalanta, Georgia, USA}, 2013.

\bibitem{LiuL15}
Yang Liu and Mingyan Liu.
\newblock An online learning approach to improving the quality of
  crowd-sourcing.
\newblock In {\em Proceedings of the 2015 {ACM} {SIGMETRICS} International
  Conference on Measurement and Modeling of Computer Systems, Portland, OR,
  USA, June 15-19, 2015}, pages 217--230, 2015.

\bibitem{MannorT04}
Shie Mannor and John~N. Tsitsiklis.
\newblock The sample complexity of exploration in the multi-armed bandit
  problem.
\newblock {\em Journal of Machine Learning Research}, 5:623--648, 2004.

\bibitem{RangiF18}
Anshuka Rangi and Massimo Franceschetti.
\newblock Multi-armed bandit algorithms for crowdsourcing systems with online
  estimation of workers' ability.
\newblock In {\em Proceedings of the 17th International Conference on
  Autonomous Agents and MultiAgent Systems, {AAMAS} 2018, Stockholm, Sweden,
  July 10-15, 2018}, pages 1345--1352, 2018.

\bibitem{Valiant84}
Leslie~G. Valiant.
\newblock A theory of the learnable.
\newblock {\em Commun. {ACM}}, 27(11):1134--1142, 1984.

\bibitem{WangZ15}
Wei Wang and Zhi-Hua Zhou.
\newblock Crowdsourcing label quality: a theoretical analysis.
\newblock {\em Science China Information Sciences}, 58(11):1--12, 2015.

\bibitem{ZhangC19}
Hanrui Zhang and Vincent Conitzer.
\newblock A pac framework for aggregating agents' judgements.
\newblock In {\em Proceedings of the 2019 AAAI Conference on Artificial
  Intelligence, Honolulu, Hawaii, USA, January 27 - February 2, 2019}, 2019.

\bibitem{ZhangMS15}
Hao Zhang, Yao Ma, and Masashi Sugiyama.
\newblock Bandit-based task assignment for heterogeneous crowdsourcing.
\newblock {\em Neural Computation}, 27(11):2447--2475, 2015.

\bibitem{ZhouCL14}
Yuan Zhou, Xi~Chen, and Jian Li.
\newblock Optimal {PAC} multiple arm identification with applications to
  crowdsourcing.
\newblock In {\em Proceedings of the 31th International Conference on Machine
  Learning, {ICML} 2014, Beijing, China, 21-26 June 2014}, pages 217--225,
  2014.

\end{thebibliography}

\end{document}